\newcounter{thm_counter}
\newcounter{lem_counter}
\newcounter{pro_counter}
\newcounter{ass_counter}
\newtheorem{theorem}[thm_counter]{Theorem}
\newtheorem{proposition}[pro_counter]{Proposition}
\newtheorem{lemma}[lem_counter]{Lemma}
\newtheorem{assumption}[ass_counter]{Assumption}
\newtheorem{definition}[ass_counter]{Definition}
\title{Finite-Sample Analysis of Proximal Gradient TD Algorithms}
\author{
Bo Liu \\
UMass Amherst\\
{\scriptsize \texttt{boliu@cs.umass.edu}}
\And
Ji Liu\\
University of Rochester\\
{\scriptsize \texttt{jliu@cs.rochester.edu}}
\And
Mohammad Ghavamzadeh \\
Adobe \& INRIA Lille\\
\quad {\scriptsize \texttt{Mohammad.ghavamzadeh@inria.fr\quad}} 
\And
Sridhar Mahadevan\\
UMass Amherst\\
\quad {\scriptsize \texttt{mahadeva@cs.umass.edu}}
\And
Marek Petrik\\
IBM Research\\
{\scriptsize \texttt{marekpetrik@gmail.com}}
}
\begin{document}
\maketitle

\begin{abstract} 
 In this paper, we show for the first time how gradient TD  (GTD) reinforcement learning methods can be formally derived as true stochastic gradient algorithms, not with respect to their original objective functions as previously attempted, but rather using derived primal-dual saddle-point objective functions. We  then conduct a saddle-point error analysis to obtain finite-sample bounds on their performance.  Previous analyses of this class of algorithms use stochastic approximation techniques to prove asymptotic convergence,  and no finite-sample analysis had been  attempted.  Two novel GTD algorithms are also proposed, namely projected GTD2 and GTD2-MP, which use proximal ``mirror maps'' to yield  improved convergence guarantees and acceleration, respectively.  The results of our theoretical analysis imply that the GTD family of algorithms are  comparable and may indeed be preferred over existing least squares TD methods for off-policy learning, due to their linear complexity. We provide experimental results showing the improved performance of our accelerated gradient TD methods. 
\end{abstract} 


\section{INTRODUCTION}

Obtaining a true stochastic gradient temporal difference method has been a longstanding goal of reinforcement learning (RL) ~\citep{ndp:book,sutton-barto:book}, ever since it was discovered that the original TD method was unstable in many off-policy scenarios where the target behavior being learned and the exploratory behavior producing samples differ. 
\citet{Sutton:GTD1:2008,FastGradient:2009} proposed the family of gradient-based temporal difference (GTD) algorithms which offer several  interesting properties. A key property of this class of GTD algorithms is that they are asymptotically off-policy convergent, which  was shown using stochastic approximation ~\citep{borkar:book}. This is quite important when we notice that many RL algorithms, especially those that are based on stochastic approximation, such as TD($\lambda$), do not have convergence guarantees in the off-policy setting.
Unfortunately, this class of GTD algorithms are {\em not true stochastic gradient methods with respect to their original objective functions}, as pointed out in~\citet{szepesvari2010algorithms}. The reason is not surprising: the gradient of the objective functions used involve products of terms, which cannot be sampled directly, and was decomposed by a rather ad-hoc splitting of terms. In this paper, we take a major step forward in resolving this problem by showing a principled way of designing true stochastic gradient TD algorithms by using a primal-dual saddle point objective function, derived from the original objective functions, coupled with the principled use of {\em operator splitting}~\citep{BOOK2011PROXSPLIT}. 


Since in real-world applications of RL, we  have access to only a finite amount of data, finite-sample analysis of gradient TD algorithms is essential as it clearly shows the effect of the number of samples (and the parameters that play a role in the sampling budget of the algorithm) in their final performance. However, most of the work on finite-sample analysis in RL has been focused on  batch RL (or approximate dynamic programming) algorithms (e.g.,~\citealt{Kakade02AO,Munos08FT,Antos06learningnear-optimal,Lazaric10AC}), especially those that are  least squares TD (LSTD)-based (e.g.,~\citealt{Lazaric_finite-sampleanalysis,lstdrp:nips2010,LASSOTD:2011,Lazaric12FS}), and more importantly restricted to  the on-policy setting. In this paper, we provide the finite-sample analysis of the GTD family of  algorithms, a relatively novel class of gradient-based TD methods that are guaranteed to converge even in the off-policy setting, and for which, to the best of our knowledge, no finite-sample analysis has been reported. This analysis is challenging because {\bf 1)} the stochastic approximation methods that have been used to prove the asymptotic convergence of these algorithms do not address convergence rate analysis;  {\bf 2)} as we explain  in detail in Section~\ref{subsec:GTD-algos},  the techniques used for the analysis of the stochastic gradient methods cannot be applied here;  {\bf 3)} finally, the difficulty of finite-sample analysis in the off-policy setting. 

The major contributions of this paper include the first  finite-sample analysis of the class of gradient TD algorithms, as well as the design and analysis of several improved GTD methods that result from our novel approach of  formulating  gradient TD methods as true stochastic gradient algorithms w.r.t.~a saddle-point objective function.  We then use the techniques applied in the analysis of the stochastic gradient methods to propose a unified finite-sample analysis for the previously proposed as well as our novel gradient TD algorithms. Finally, given the results of our analysis, we study the GTD class of algorithms from several different perspectives, including acceleration in convergence, learning with biased importance sampling factors, etc.


\section{PRELIMINARIES}
\label{sec:preliminaries}

Reinforcement Learning (RL)~\citep{ndp:book,sutton-barto:book} is a class of learning problems in which an agent interacts with an unfamiliar, dynamic and stochastic environment, where the agent's goal is to optimize some measure of its long-term performance. This interaction is conventionally modeled as a Markov decision process (MDP). A MDP is defined as the tuple $({\mathcal{S},\mathcal{A},P_{ss'}^{a},R,\gamma})$, where $\mathcal{S}$ and $\mathcal{A}$ are the sets of states and actions, the transition kernel $P_{ss'}^{a}$ specifying the probability of transition from state $s\in\mathcal{S}$ to state $s'\in\mathcal{S}$ by taking action $a\in\mathcal{A}$, $R(s,a):\mathcal{S}\times\mathcal{A}\to\mathbb{R}$ is the reward function bounded by $R_{\max}$., and $0\leq\gamma<1$ is a discount factor. A stationary policy $\pi:\mathcal{S}\times\mathcal{A}\to\left[{0,1}\right]$ is a probabilistic mapping from states to actions. The main objective of a RL algorithm is to find an optimal policy. In order to achieve this goal, a key step in many algorithms is to calculate the value function of a given policy $\pi$, i.e.,~$V^{\pi}:\mathcal{S}\to\mathbb{R}$, a process known as {\em policy evaluation}. It is known that $V^\pi$ is the unique fixed-point of the {\em Bellman operator} $T^\pi$, i.e.,
\begin{equation}
\label{eq:BellmanEq}
V^\pi = T^\pi V^\pi = R^\pi + \gamma P^\pi V^\pi,
\end{equation}
where $R^\pi$ and $P^\pi$ are the reward function and transition kernel of the Markov chain induced by policy $\pi$. In Eq.~\ref{eq:BellmanEq}, we may imagine $V^\pi$ as a $|\mathcal{S}|$-dimensional vector and write everything in vector/matrix form. In the following, to simplify the notation, we often drop the dependence of $T^\pi$, $V^\pi$, $R^\pi$, and $P^\pi$ to $\pi$. 

We denote by $\pi_b$, the behavior policy that generates the data, and by $\pi$, the target policy that we would like to evaluate. They are the same in the on-policy setting and different in the off-policy scenario. For each state-action pair $(s_i,a_i)$, such that $\pi_b(a_i|s_i)>0$, we define the importance-weighting factor $\rho_i = \pi(a_i|s_i)/\pi _b(a_i|s_i)$ with $\rho_{\max}\geq 0$ being its maximum value over the state-action pairs.

When $\mathcal{S}$ is large or infinite, we often use a linear approximation architecture for $V^\pi$ with parameters $\theta\in\mathbb{R}^d$ and $L$-bounded basis functions $\{\varphi_i\}_{i=1}^d$, i.e.,~$\varphi_i:\mathcal{S}\rightarrow\mathbb{R}$ and $\max_i||\varphi_i||_\infty\leq L$. We denote by $\phi(\cdot)=\big(\varphi_1(\cdot),\ldots,\varphi_d(\cdot)\big)^\top$ the feature vector and by $\mathcal{F}$ the linear function space spanned by the basis functions $\{\varphi_i\}_{i=1}^d$, i.e.,~$\mathcal{F}=\big\{f_\theta\mid\theta\in\mathbb{R}^d\;\text{and}\;f_\theta(\cdot)=\phi(\cdot)^\top\theta\big\}$. We may write the approximation of $V$ in $\mathcal{F}$ in the vector form as $\hat{v}=\Phi\theta$, where $\Phi$ is the $|\mathcal{S}|\times d$ feature matrix. When only $n$ training samples of the form $\mathcal{D}=\big\{\big(s_i,a_i,r_i=r(s_i,a_i),s'_i\big)\big\}_{i=1}^n,\;s_i\sim\xi,\;a_i\sim\pi_b(\cdot|s_i),\;s'_i\sim P(\cdot|s_i,a_i)$, are available ($\xi$ is a distribution over the state space $\mathcal{S}$), we may write the {\em empirical Bellman operator} $\hat{T}$ for a function in $\mathcal{F}$ as 
\begin{equation}
\label{eq:EmpBellmanEq}
\hat{T}(\hat \Phi \theta ) = \hat R + \gamma\hat\Phi '\theta,
\end{equation}
where $\hat{\Phi}$ (resp.~$\hat{\Phi}'$) is the empirical feature matrix of size $n\times d$, whose $i$-th row is the feature vector $\phi(s_i)^\top$ (resp.~$\phi(s'_i)^\top$), and $\hat{R}\in\mathbb{R}^n$ is the reward vector, whose $i$-th element is $r_i$. We denote by $\delta_i(\theta)=r_i+\gamma\phi_i^{'\top}\theta-\phi_i^\top\theta$, the TD error for the $i$-th sample $(s_i,r_i,s'_i)$ and define $\Delta\phi_i=\phi_i-\gamma\phi'_i$. Finally, we define the matrices $A$ and $C$, and the vector $b$ as

\vspace{-0.15in}
\begin{small}
\begin{equation}
A := \mathbb{E}\big[\rho_i\phi_i(\Delta\phi_i)^\top\big],\;\;b := \mathbb{E}\left[\rho_i\phi_ir_i\right],\;\;C := \mathbb{E}[\phi_i\phi_i^\top],
\label{eq:abc}   
\end{equation}
\end{small}
\vspace{-0.2in}

where the expectations are w.r.t.~$\xi$ and $P^{\pi_b}$. We also denote by $\Xi$, the diagonal matrix whose elements are $\xi(s)$, and ${\xi _{\max }} := {\max _s}\xi (s)$. For each sample $i$ in the training set $\mathcal{D}$, we can calculate an unbiased estimate of $A$, $b$, and $C$ as follows:
\begin{equation}
\hat{A}_i := \rho_i\phi_i\Delta\phi_i^\top, \quad\; \hat{b}_i := \rho_ir_i\phi_i, \quad\; \hat{C}_i := \phi_i\phi_i^\top.
\label{eq:atbtct}
\end{equation}


\subsection{GRADIENT-BASED TD ALGORITHMS}
\label{subsec:GTD-algos}

The class of gradient-based TD (GTD) algorithms were proposed by~\citet{Sutton:GTD1:2008,FastGradient:2009}. These algorithms target two objective functions: the {\em norm of the expected TD update} (NEU) and the {\em mean-square projected Bellman error} (MSPBE), defined as (see e.g.,~\citealt{maei2011gradient})\footnote{It is important to note that $T$ in~\eqref{eq:neu} and~\eqref{eq:mspbe} is $T^\pi$, the Bellman operator of the target policy $\pi$.}

\vspace{-0.15in}
\begin{small}
\begin{align}
{\rm {NEU}}(\theta) &= ||\Phi^\top\Xi(T\hat{v}-\hat{v})||^{2}\;,
\label{eq:neu}\\
{\rm {MSPBE}}(\theta) 
&=  ||\hat{v} - \Pi T\hat{v}||_{\xi}^2 = ||\Phi^\top\Xi(T\hat{v}-\hat{v})||_{C^{-1}}^2\;,
\label{eq:mspbe}
\end{align}
\end{small}
\vspace{-0.2in}

where $C=\mathbb{E}[\phi_i\phi_i^\top]=\Phi^\top\Xi\Phi$ is the covariance matrix defined in Eq.~\ref{eq:abc} and is assumed to be non-singular, and $\Pi = \Phi(\Phi ^\top\Xi\Phi)^{-1}\Phi^\top\Xi$ is the orthogonal projection operator into the function space $\mathcal{F}$, i.e.,~for any bounded function $g,\;\Pi g=\arg\min_{f\in\mathcal{F}}||g-f||_{\xi}$. From~\eqref{eq:neu} and~\eqref{eq:mspbe}, it is clear that NEU and MSPBE are square unweighted and weighted by $C^{-1}$, $\ell_2$-norms of the quantity $\Phi^\top\Xi(T\hat{v}-\hat{v})$, respectively, and thus, the two objective functions can be unified as
\begin{equation}
J(\theta)=||\Phi^\top\Xi(T\hat{v}-\hat{v})||_{M^{-1}}^{2} = ||\mathbb{E}[\rho_i\delta_i(\theta)\phi_i]||_{M^{-1}}^{2},
\label{eq:j}
\end{equation}
with $M$ equals to the identity matrix $I$ for NEU and to the covariance matrix $C$ for MSPBE. The second equality in~\eqref{eq:j} holds because of the following lemma from Section 4.2 in~\citet{maei2011gradient}.
\begin{lemma}
\label{lem:e}
Let $\mathcal{D}=\big\{\big(s_i,a_i,r_i,s'_i\big)\big\}_{i=1}^n,\;s_i\sim\xi,\;a_i\sim\pi_b(\cdot|s_i),\;s'_i\sim P(\cdot|s_i,a_i)$ be a training set generated by the behavior policy $\pi_b$ and $T$ be the Bellman operator of the target policy $\pi$. Then, we have 
\begin{equation*}
\Phi^\top\Xi (T\hat v - \hat v) = \mathbb{E}\big[\rho_i\delta_i(\theta)\phi_i\big] = b-A\theta.
\end{equation*}
\end{lemma}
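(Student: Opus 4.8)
The statement bundles two equalities, and the plan is to treat them separately, starting with the algebraically trivial one. For $\mathbb{E}[\rho_i\delta_i(\theta)\phi_i]=b-A\theta$, I would substitute the definition of the TD error and note that $\gamma\phi_i^{'\top}\theta-\phi_i^\top\theta=-(\Delta\phi_i)^\top\theta$, so that $\rho_i\delta_i(\theta)\phi_i=\rho_i r_i\phi_i-\rho_i\phi_i(\Delta\phi_i)^\top\theta$. Taking expectations and pulling the deterministic parameter $\theta$ outside then gives $\mathbb{E}[\rho_i r_i\phi_i]-\mathbb{E}[\rho_i\phi_i(\Delta\phi_i)^\top]\theta=b-A\theta$ directly from the definitions in~\eqref{eq:abc}. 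No probabilistic content enters this step.

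The first equality, $\Phi^\top\Xi(T\hat v-\hat v)=\mathbb{E}[\rho_i\delta_i(\theta)\phi_i]$, carries the real content, and I would prove it by expanding both sides in coordinates and matching them term by term. With $\hat v=\Phi\theta$, the $s$-th entry of $T\hat v-\hat v$ is $R^\pi(s)+\gamma\sum_{s'}P^\pi(s'|s)\phi(s')^\top\theta-\phi(s)^\top\theta$, and since $\Phi^\top\Xi$ acts as the $\xi$-weighted average $\sum_s\xi(s)\phi(s)(\cdot)$, the left-hand side equals $\sum_s\xi(s)\phi(s)\big(R^\pi(s)+\gamma\sum_{s'}P^\pi(s'|s)\phi(s')^\top\theta-\phi(s)^\top\theta\big)$. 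The goal is then to show that the sampled right-hand side reproduces this sum exactly.

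For the right-hand side I would unfold $\mathbb{E}[\rho_i\delta_i(\theta)\phi_i]$ under the generative model $s_i\sim\xi$, $a_i\sim\pi_b(\cdot|s_i)$, $s_i'\sim P(\cdot|s_i,a_i)$ by conditioning from the inside out. The innermost expectation over $s_i'$ replaces $\phi(s_i')$ by $\sum_{s'}P_{s_is'}^{a_i}\phi(s')$. The crucial step is the expectation over $a_i\sim\pi_b$: because $\rho_i=\pi(a_i|s_i)/\pi_b(a_i|s_i)$, the identity $\sum_a\pi_b(a|s)\frac{\pi(a|s)}{\pi_b(a|s)}g(s,a)=\sum_a\pi(a|s)g(s,a)$ converts the behavior-policy average into a target-policy average, which is exactly the off-policy correction that turns the raw reward and transition kernel into $R^\pi$ and $P^\pi$. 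The remaining outer expectation over $s_i\sim\xi$ then produces the $\sum_s\xi(s)\phi(s)(\cdots)$ form from the previous paragraph, closing the match.

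I expect the importance-sampling reweighting to be the main obstacle, less for any technical difficulty than because it is the one place where the off-policy structure must be handled with care: the left-hand side is defined through the Bellman operator $T^\pi$ of the \emph{target} policy, whereas every sample is drawn under the \emph{behavior} policy $\pi_b$, and it is precisely the factor $\rho_i$ that bridges the two measures. The only hypothesis this argument needs is that $\pi_b(a|s)>0$ whenever $\pi(a|s)>0$, so that $\rho_i$ is well-defined; this is already built into the definition of the importance-weighting factor in the preliminaries.
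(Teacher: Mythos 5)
Your proposal is correct and complete. Note that the paper itself does not prove this lemma at all --- it is imported verbatim from Section 4.2 of Maei (2011) --- so there is no in-paper argument to compare against; your derivation is the standard one and matches the cited source in substance. Both halves check out: the second equality is, as you say, pure bookkeeping from $\delta_i(\theta)=r_i-(\Delta\phi_i)^\top\theta$ and the definitions of $A$ and $b$, and the first equality hinges exactly where you place the weight, on the change of measure $\sum_a\pi_b(a|s)\,\frac{\pi(a|s)}{\pi_b(a|s)}\,g(s,a)=\sum_a\pi(a|s)\,g(s,a)$, which simultaneously converts $r(s,a)$ and $P(\cdot|s,a)$ into $R^\pi$ and $P^\pi$ under the outer $\sum_s\xi(s)\phi(s)(\cdot)$ induced by $\Phi^\top\Xi$. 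You are also right to flag absolute continuity ($\pi_b(a|s)>0$ wherever $\pi(a|s)>0$) as the only hypothesis the change of measure needs; the paper builds this into the definition of $\rho_i$.
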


Motivated by minimizing the NEU and MSPBE objective functions using the stochastic gradient methods, the GTD and GTD2 algorithms were proposed with the following update rules: 
\begin{align}
\label{eq:gtd}
\hspace{-1.75cm}\textbf{GTD:}\quad\quad y_{t + 1} &= y_t + \alpha_t\big(\rho_t\delta_t(\theta_t)\phi_t - y_t\big), \\
\theta_{t + 1} &= \theta_t + \alpha_t\rho_t\Delta\phi_t(y_t^\top\phi_t), \nonumber
\end{align}
\begin{align}
\label{eq:gtd2}
\hspace{-1cm}\textbf{GTD2:}\quad\quad y_{t + 1} &= y_t + \alpha_t\big(\rho_t\delta_t(\theta_t) - \phi_t^\top y_t\big)\phi_t,\\
\theta_{t + 1} &= \theta_t + \alpha_t\rho_t\Delta\phi_t(y_t^\top\phi_t). \nonumber
\end{align}
However, it has been shown that the above update rules do not update the value function parameter $\theta$ in the gradient direction of NEU and MSPBE, and thus, NEU and MSPBE are not the true objective functions of the GTD and GTD2 algorithms~\citep{szepesvari2010algorithms}. Consider the NEU objective function in~\eqref{eq:neu}. Taking its gradient w.r.t.~$\theta$, we obtain
\begin{eqnarray}
\nonumber
 - \frac{1}{2}\nabla {\rm{NEU}}(\theta ) &=&  - \big(\nabla\mathbb{E}\big[\rho_i\delta_i(\theta)\phi^\top_i\big]\big)\mathbb{E}\big[\rho_i\delta_i(\theta)\phi_i\big] \\
 \nonumber
 &=&  - \big(\mathbb{E}\big[\rho_i\nabla\delta_i(\theta)\phi^\top_i\big]\big)\mathbb{E}\big[\rho_i\delta_i(\theta)\phi_i\big]\\
 &=& \mathbb{E}\big[\rho_i\Delta\phi_i\phi_i^\top\big]\mathbb{E}\big[\rho_i\delta_i(\theta)\phi_i\big].
 \label{eq:neu-grad}
\end{eqnarray}
If the gradient can be written as a single expectation, then it is straightforward to use a stochastic gradient method. However, we have a product of two expectations in~\eqref{eq:neu-grad}, and unfortunately, due to the correlation between them, the sample product (with a single sample) won't be an unbiased estimate of the gradient. To tackle this, the GTD algorithm uses an auxiliary variable $y_t$ to estimate $\mathbb{E}\big[\rho_i\delta_i(\theta)\phi_i\big]$, and thus, the overall algorithm is no longer a true stochastic gradient method w.r.t.~NEU. It can be easily shown that the same problem exists for GTD2 w.r.t.~the MSPBE objective function. This prevents us from using the standard convergence analysis techniques of stochastic gradient descent methods to obtain a finite-sample performance bound for the GTD and GTD2 algorithms.


It should be also noted that in the original publications of GTD/GTD2 algorithms~\citep{Sutton:GTD1:2008,FastGradient:2009}, the authors discussed handling the off-policy scenario using both importance and rejected sampling. In rejected sampling that was mainly used in~\citet{Sutton:GTD1:2008,FastGradient:2009}, a sample $({s_i},{a_i},{r_i},s'_i)$ is rejected and the parameter $\theta$ does not update for this sample, if $\pi(a_i|s_i) = 0$. This sampling strategy is not efficient since a lot of samples will be discarded if $\pi_b$ and $\pi$ are very different. 


\subsection{RELATED WORK}

\vskip -1.3cm
Before we present a finite-sample performance bound for GTD and GTD2, it would be helpful to give a brief overview of the existing literature on finite-sample analysis of the TD algorithms. The convergence rate of the TD algorithms mainly depends on $(d,n,\nu)$, where $d$ is the size of the approximation space (the dimension of the feature vector), $n$ is the number of samples, and $\nu$ is the smallest eigenvalue of the sample-based covariance matrix $\hat C=\hat\Phi^\top\hat\Phi$, i.e.,~$\nu=\lambda_{\min }(\hat C)$.

\citet{Antos06learningnear-optimal} proved an error bound of $O(\frac{d\log d}{n^{1/4}})$ for LSTD in bounded spaces.~\citet{Lazaric_finite-sampleanalysis} proposed a LSTD analysis in learner spaces and obtained a tighter bound of $O(\sqrt{\frac{d\log d}{n\nu })}$ and later used it to derive a bound for the least-squares policy iteration (LSPI) algorithm~\citep{Lazaric12FS}.~\citet{bruno:lstdlambda} recently proposed the first convergence analysis for LSTD$(\lambda)$ and derived a bound of $\tilde O(d/\nu\sqrt n )$. The analysis is a bit different than the one in~\citet{Lazaric_finite-sampleanalysis} and the bound is weaker in terms of $d$ and $\nu$. Another recent result is by~\citet{drift:prashanth2014fast} that use stochastic approximation to solve LSTD$(0)$, where the resulting algorithm is exactly TD$(0)$ with random sampling (samples are drawn i.i.d.~and not from a trajectory), and report a Markov design bound (the bound is computed only at the states used by the algorithm) of $O(\sqrt{\frac{d}{n\nu}})$ for LSTD$(0)$. All these results are for the on-policy setting, except the one by~\citet{Antos06learningnear-optimal} that also holds for the off-policy formulation. Another work in the off-policy setting is by~\citet{pires:2012:inverse} that uses a bounding trick and improves the result of~\citet{Antos06learningnear-optimal} by a $\log d$ factor. 

The line of research reported here has much in common with work on proximal reinforcement learning \citep{proximalrl}, which explores first-order reinforcement learning algorithms using {\em mirror maps} \citep{bubeck2014optml,juditsky2008solving} to construct primal-dual spaces. This work began originally with a dual space formulation of first-order sparse TD learning \citep{mahadevan:MID:2012}. A saddle point formulation for off-policy TD learning was initially explored in \citet{ROTD:NIPS2012}, where the objective function is the norm of the approximation residual of a linear inverse problem \citep{pires:2012:inverse}. A sparse off-policy GTD2 algorithm with regularized dual averaging is introduced by \citet{ZHIWEI2014}. These studies provide different approaches to formulating the problem, first as a variational inequality problem \citep{juditsky2008solving,proximalrl} or as a linear inverse problem \citep{ROTD:NIPS2012}, or as a quadratic objective function (MSPBE) using two-time-scale solvers \citep{ZHIWEI2014}. In this paper, we are going to explore the true nature of the GTD algorithms as stochastic gradient  algorithm w.r.t the convex-concave saddle-point formulations of NEU and MSPBE.


\section{SADDLE-POINT FORMULATION OF GTD ALGORITHMS}
\label{sec:saddle-point}

In this section, we show how the GTD and GTD2 algorithms can be formulated as true stochastic gradient (SG) algorithms by writing their respective objective functions, NEU and MSPBE, in the form of a convex-concave saddle-point. As discussed earlier, this new formulation of GTD and GTD2 as true SG methods allows us to use the convergence analysis techniques for SGs in order to derive finite-sample performance bounds for these RL algorithms. Moreover, it allows us to use more efficient algorithms that have been recently developed to solve SG problems, such as {\em stochastic Mirror-Prox} (SMP)~\citep{juditsky2008solving}, to derive more efficient versions of GTD and GTD2. 

A particular type of convex-concave saddle-point formulation is formally defined as  
\begin{equation}
\mathop {\min }\limits_\theta  \mathop {\max }\limits_y \big( {L(\theta ,y) = \left\langle {b-A\theta,y} \right\rangle  + F(\theta ) - K(y)} \big),
\label{eq:spgeneral}
\end{equation}
where $F(\theta)$ is a convex function and $K(y)$ is a smooth convex function such that  
\begin{equation}
\label{eq:saddle-gcondition}
K(y) - K(x) - \left\langle {\nabla K(x),y - x} \right\rangle  \le \frac{{{L_K}}}{2}||x - y|{|^2}.
\end{equation}
Next we follow~\citet{juditsky2008solving,RobustSA:2009,chen2013optimal} and define the following error function for the saddle-point problem~\eqref{eq:spgeneral}.
\begin{definition}
The error function of the saddle-point problem~\eqref{eq:spgeneral} at each point $(\theta',y')$ is defined as 
\begin{equation}
{\rm{Err}}(\theta',y') = \max_y\;L(\theta',y) - \min_\theta\;L(\theta,y').
\label{eq:errdef}
\end{equation}
\end{definition}

%

In this paper, we consider the saddle-point problem~\eqref{eq:spgeneral} with $F(\theta)=0$ and $K(y)=\frac{1}{2}||y||_M^2$, i.e.,
\begin{equation}
\mathop{\min}\limits_{\theta}\mathop{\max}\limits_{y}\Big({L(\theta,y)=\left\langle {b-A\theta,y}\right\rangle -\frac{1}{2}||y||_{M}^{2}}\Big),
\label{eq:sp}
\end{equation}
where $A$ and $b$ were defined by Eq.~\ref{eq:abc}, and $M$ is a positive definite matrix. It is easy to show that $K(y)=\frac{1}{2}||y||^2_M$ satisfies the condition in Eq.~\ref{eq:saddle-gcondition}. 

We first show in Proposition~\ref{pro:1} that if $(\theta^*,y^*)$ is the saddle-point of problem~\eqref{eq:sp}, then $\theta^*$ will be the optimum of NEU and MSPBE defined in Eq.~\ref{eq:j}. We then prove in Proposition~\ref{pro:2} that GTD and GTD2 in fact find this saddle-point.
%
\begin{proposition}
For any fixed $\theta$, we have $\frac{1}{2}J(\theta)=\mathop{\max}_yL(\theta,y)$, where $J(\theta)$ is defined by Eq.~\ref{eq:j}.
\label{pro:1}
\end{proposition}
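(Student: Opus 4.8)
The plan is to reduce the claim to a one-variable concave maximization, since for fixed $\theta$ the map $y \mapsto L(\theta,y)$ is a quadratic that is strictly concave (because $M$ is positive definite). First I would invoke Lemma~\ref{lem:e} to rewrite the objective in a convenient closed form: combining the definition $J(\theta)=||\mathbb{E}[\rho_i\delta_i(\theta)\phi_i]||_{M^{-1}}^2$ in~\eqref{eq:j} with the identity $\mathbb{E}[\rho_i\delta_i(\theta)\phi_i]=b-A\theta$ gives $J(\theta)=||b-A\theta||_{M^{-1}}^2$. It is then natural to abbreviate $g:=b-A\theta$, so that $L(\theta,y)=\langle g,y\rangle-\tfrac{1}{2}||y||_M^2=\langle g,y\rangle-\tfrac{1}{2}y^\top M y$ and the goal becomes showing $\max_y L(\theta,y)=\tfrac{1}{2}\,g^\top M^{-1}g$.

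Next I would carry out the inner maximization over $y$. Since $M\succ0$, the quadratic term $-\tfrac{1}{2}y^\top M y$ is strictly concave, so $L(\theta,\cdot)$ has a unique maximizer determined by the first-order condition $\nabla_y L(\theta,y)=g-My=0$, which, using invertibility of $M$, yields $y^\star=M^{-1}g$. Substituting $y^\star$ back into $L$ and simplifying,
\begin{equation*}
L(\theta,y^\star)=\langle g,M^{-1}g\rangle-\tfrac{1}{2}(M^{-1}g)^\top M(M^{-1}g)=g^\top M^{-1}g-\tfrac{1}{2}g^\top M^{-1}g=\tfrac{1}{2}g^\top M^{-1}g,
\end{equation*}
where the middle step uses the symmetry of $M^{-1}$. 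Recognizing $g^\top M^{-1}g=||g||_{M^{-1}}^2=J(\theta)$, this establishes $\max_y L(\theta,y)=\tfrac{1}{2}J(\theta)$, as required, and the maximum is genuinely attained by strict concavity.

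Honestly, there is no deep obstacle here; the statement is a routine convex-analytic computation (in fact $\tfrac{1}{2}||\cdot||_{M^{-1}}^2$ is precisely the Fenchel conjugate of $\tfrac{1}{2}||\cdot||_M^2$, which is the abstract reason the identity holds). The only points that require a moment of care are bookkeeping rather than substance: confirming the norm conventions $||y||_M^2=y^\top M y$ and $||g||_{M^{-1}}^2=g^\top M^{-1}g$ so that the factors of $\tfrac{1}{2}$ match up correctly, and noting explicitly where positive definiteness of $M$ is used — both to guarantee that $M^{-1}$ exists (so the critical point is well defined) and to guarantee concavity (so the critical point is the global maximizer and the supremum is finite and attained). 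If one wished to avoid calculus entirely, an equivalent route is to complete the square, writing $L(\theta,y)=\tfrac{1}{2}g^\top M^{-1}g-\tfrac{1}{2}(y-M^{-1}g)^\top M(y-M^{-1}g)$, from which the maximal value $\tfrac{1}{2}g^\top M^{-1}g$ at $y=M^{-1}g$ can be read off directly.
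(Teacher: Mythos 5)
Your proposal is correct and follows essentially the same route as the paper: compute the maximizer $y^*(\theta)=M^{-1}(b-A\theta)$ of the concave quadratic in $y$, substitute it back, and identify the resulting value with $\tfrac{1}{2}J(\theta)$ via Lemma~\ref{lem:e}. The paper states this in two lines; you simply carry out the substitution explicitly and add the (accurate) observations about where positive definiteness of $M$ is used.
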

\begin{proof}
Since $L(\theta,y)$ is an unconstrained quadratic program w.r.t.~$y$, the optimal $y^*(\theta)=\arg\max_y L(\theta,y)$ can be analytically computed as
\begin{equation}
y^{*}(\theta)=M^{-1}(b-A\theta).
\label{eq:ystar}
\end{equation}
The result follows by plugging $y^*$ into~\eqref{eq:sp} and using the definition of $J(\theta)$ in Eq.~\ref{eq:j} and Lemma~\ref{lem:e}.
\end{proof}

\begin{proposition}
\label{pro:2}
GTD and GTD2 are true stochastic gradient algorithms w.r.t.~the objective function $L(\theta,y)$ of the saddle-point problem~\eqref{eq:sp} with $M=I$ and $M=C={\Phi^{\top}}\Xi\Phi$ (the covariance matrix), respectively.
\end{proposition}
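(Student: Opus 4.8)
The plan is to compute the exact partial gradients of $L(\theta,y)$, form the natural stochastic descent--ascent iteration for the saddle point (gradient descent in $\theta$, gradient ascent in $y$) driven by the single-sample unbiased estimators $\hat A_i,\hat b_i,\hat C_i$ of Eq.~\ref{eq:atbtct}, and then verify term by term that the resulting updates coincide \emph{exactly} with the GTD rule~\eqref{eq:gtd} when $M=I$ and the GTD2 rule~\eqref{eq:gtd2} when $M=C$.

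First I would differentiate. Because $L(\theta,y)=\langle b-A\theta,y\rangle-\tfrac12\|y\|_M^2$ is bilinear in $(\theta,y)$ plus a quadratic in $y$ alone, the partials are exact and read
\[
\nabla_y L(\theta,y)=(b-A\theta)-My,\qquad \nabla_\theta L(\theta,y)=-A^\top y .
\]
The descent--ascent step with stepsize $\alpha_t$ is therefore $y_{t+1}=y_t+\alpha_t\nabla_y L(\theta_t,y_t)$ and $\theta_{t+1}=\theta_t-\alpha_t\nabla_\theta L(\theta_t,y_t)=\theta_t+\alpha_t A^\top y_t$.

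Second, I would replace $A,b,C$ by their one-sample estimators. The key point---and precisely the reason this saddle-point formulation succeeds where the direct NEU/MSPBE gradients of~\eqref{eq:neu-grad} failed---is that $L$ is \emph{linear} in the quantities $A,b,C$, so substituting $\hat A_i,\hat b_i,\hat C_i$ produces an unbiased stochastic gradient from a single sample, with no product of two correlated expectations to contend with. The common $\theta$-update follows at once: since $\hat A_i^\top y=\rho_i\Delta\phi_i\,(\phi_i^\top y)$, the step becomes $\theta_{t+1}=\theta_t+\alpha_t\rho_t\Delta\phi_t(y_t^\top\phi_t)$, matching both algorithms.

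Third, I would treat the $y$-update by cases, the only place where $M$ enters. For $M=I$ (GTD) I would use the identity $\delta_i(\theta)=r_i-\Delta\phi_i^\top\theta$ to collapse $\hat b_i-\hat A_i\theta=\rho_i\phi_i(r_i-\Delta\phi_i^\top\theta)=\rho_i\delta_i(\theta)\phi_i$, so that the sampled $\nabla_y L$ equals $\rho_t\delta_t(\theta_t)\phi_t-y_t$, recovering~\eqref{eq:gtd}. For $M=C$ (GTD2) the extra term $-\hat C_i y=-\phi_i(\phi_i^\top y)$ turns the estimate into $(\rho_t\delta_t(\theta_t)-\phi_t^\top y_t)\phi_t$, recovering~\eqref{eq:gtd2}. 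Since this is a direct verification rather than a derivation, there is no genuine analytic obstacle; the only care required is bookkeeping of signs and transposes---in particular noting that $A^\top=\mathbb{E}[\rho_i\Delta\phi_i\phi_i^\top]$ so that $\hat A_i^\top y$ yields the factor $\Delta\phi_i(\phi_i^\top y)$, that $\phi_i^\top y=y^\top\phi_i$ is scalar, and that the TD-error rewriting makes the combination $\hat b_i-\hat A_i\theta$ factor cleanly through $\phi_i$.
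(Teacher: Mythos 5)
Your proposal is correct and follows essentially the same route as the paper: write the deterministic descent--ascent updates for the saddle-point problem, then substitute the single-sample unbiased estimators $\hat A_t,\hat b_t,\hat M_t$ to recover the GTD and GTD2 rules. You simply carry out more explicitly the bookkeeping (the identity $\hat b_t-\hat A_t\theta=\rho_t\delta_t(\theta)\phi_t$ and the transpose of $\hat A_t$) that the paper leaves implicit.
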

\begin{proof} 
It is easy to see that the gradient updates of the saddle-point problem~\eqref{eq:sp} (ascending in $y$ and descending in $\theta$) may be written as 
\begin{eqnarray}
\label{eq:sg}
{y_{t + 1}} &=& {y_t} + {\alpha _t}\left( b-{{A}{\theta _t}  - {M}{y_t}} \right),\,\\
\nonumber
{\theta _{t + 1}} &=& {\theta _t} + {\alpha _t}A^\top {y_t}.
\end{eqnarray}
We denote ${\hat M} := 1$ (resp. ${\hat M} := {\hat C}$) for GTD (resp. GTD2). 
We may obtain the update rules of GTD and GTD2 by replacing $A$, $b$, and $C$ in~\eqref{eq:sg} with their unbiased estimates $\hat A$, $\hat b$, and $\hat C$ from Eq.~\ref{eq:atbtct}, which completes the proof. 
\end{proof}

\section{FINITE-SAMPLE ANALYSIS}
\label{sec:analysis}

In this section, we provide a finite-sample analysis for a revised version of the GTD/GTD2 algorithms. We first describe the revised GTD algorithms in Section~\ref{subsec:revised} and then dedicate the rest of Section~\ref{sec:analysis} to their sample analysis. Note that from now on we use the $M$ matrix (and its unbiased estimate $\hat M_t$) to have a unified analysis for GTD and GTD2 algorithms. As described earlier, $M$ is replaced by the identity matrix $I$ in GTD and by the covariance matrix $C$ (and its unbiased estimate $\hat C_t$) in GTD2.


\subsection{THE REVISED GTD ALGORITHMS}
\label{subsec:revised}

The revised GTD algorithms that we analyze in this paper (see Algorithm~\ref{alg:pgtd2}) have three differences with the standard GTD algorithms of Eqs.~\ref{eq:gtd} and~\ref{eq:gtd2} (and Eq.~\ref{eq:sg}). {\bf 1)} We guarantee that the parameters $\theta$ and $y$ remain bounded by projecting them onto bounded convex feasible sets $\Theta$ and $Y$ defined in Assumption~\ref{ass:xyfeasible}. In Algorithm~\ref{alg:pgtd2}, we denote by $\Pi_\Theta$ and $\Pi_Y$, the projection into sets $\Theta$ and $Y$, respectively. This is standard in stochastic approximation algorithms and has been used in off-policy TD($\lambda$)~\citep{yu2012least} and actor-critic algorithms (e.g.,~\citealt{Bhatnagar09NA}). {\bf 2)} after $n$ iterations ($n$ is the number of training samples in $\mathcal{D}$), the algorithms return the weighted (by the step size) average of the parameters at all the $n$ iterations (see Eq.~\ref{eq:bartheta}). {\bf 3)} The step-size $\alpha_t$ is selected as described in the proof of Proposition~\ref{pro:hp} in the supplementary material. Note that this fixed step size of $O(1/\sqrt{n})$ is required for the high-probability bound in Proposition~\ref{pro:hp} (see~\citealt{RobustSA:2009} for more details).
\begin{algorithm}
\caption{Revised GTD Algorithms}
\label{alg:pgtd2} 
\begin{algorithmic}[1]
\FOR {$t=1,\ldots,n$}
\STATE Update parameters

\vspace{-0.175in}
\begin{small}
\begin{align}
\label{eq:pgtd2}
y_{t+1} &= \Pi_Y \Big(y_t + \alpha_t(\hat{b_t} - \hat{A}_t\theta_t - \hat{M}_ty_t)\Big) \nonumber \\
\theta_{t+1} &= \Pi_\Theta\Big(\theta_t + \alpha_t \hat{A}_t^\top y_t\Big)
\end{align}
\end{small}
\vspace{-0.2in}

\ENDFOR
\STATE OUTPUT

\vspace{-0.075in}
\begin{small}
\begin{equation}
{\bar \theta _n} := \frac{{\sum\nolimits_{t = 1}^n {{\alpha _t}{\theta _t}} }}{{\sum\nolimits_{t = 1}^n {{\alpha _t}} }}
\quad , \quad
{\bar y _n} := \frac{{\sum\nolimits_{t = 1}^n {{\alpha _t}{y _t}} }}{{\sum\nolimits_{t = 1}^n {{\alpha _t}} }}
\label{eq:bartheta}
\end{equation}
\end{small}


\end{algorithmic}
\end{algorithm}

\subsection{ASSUMPTIONS}

In this section, we make several assumptions on the MDP and basis functions that are used in our finite-sample analysis of the revised GTD algorithms. These assumptions are quite standard and are similar to those made in the prior work on  GTD algorithms~\citep{Sutton:GTD1:2008,FastGradient:2009,maei2011gradient} and those made in the analysis of SG algorithms~\citep{RobustSA:2009}.


\begin{assumption}
(\textbf{Feasibility Sets})
\label{ass:xyfeasible}
We define the bounded closed convex sets $\Theta\subset\mathbb{R}^d$ and $Y\subset\mathbb{R}^d$ as the feasible sets in Algorithm~\ref{alg:pgtd2}. We further assume that the saddle-point $(\theta^*,y^*)$ of the optimization problem~\eqref{eq:sp} belongs to $\Theta\times Y$. We also define $D_\theta:=\big[\max_{\theta\in\Theta}||\theta||_2^2-\min_{\theta\in\Theta}||\theta||_2^2\big]^{1/2}$, $D_y:=\big[\max_{y\in Y}||y||_2^2-\min_{y\in Y}||y||_2^2\big]^{1/2}$, and $R=\max\big\{\max_{\theta\in\Theta}||\theta||_2,\max_{y\in Y}||y||_2\big\}$. 
%
\end{assumption}

\begin{assumption}
(\textbf{Non-singularity})
We assume that the covariance matrix $C  = \mathbb{E}[{\phi_i}\phi_i^\top]$ and matrix $A=\mathbb{E}\big[\rho_i\phi_i(\Delta\phi_i)^\top\big]$ are non-singular.
\label{ass:C}
\end{assumption}

\begin{assumption} 
(\textbf{Boundedness})
\label{ass:bound}
Assume the features $(\phi_i,\phi^{'}_i)$ have uniformly bounded second moments. This together with the boundedness of features (by $L$) and importance weights (by $\rho_{\max}$) guarantees that the matrices $A$ and $C$, and vector $b$ are uniformly bounded. 
\end{assumption}

This assumption guarantees that for any $(\theta,y)\in\Theta\times Y$, the unbiased estimators of $b-A\theta-My$ and $A^\top y$, i.e.,
\begin{align}
\mathbb{E}[\hat{b}_t - \hat{A}_t\theta - \hat{M}_t y] &= b - A\theta  - My, \nonumber \\
\mathbb{E}[\hat{A}_t^\top y] &= {A^\top}y,
\end{align}
all have bounded variance, i.e.,
\begin{align}
\nonumber
\mathbb{E}\big[||\hat{b}_t - \hat{A}_t\theta - \hat{M}_t y - (b - A\theta  - My)|{|^2}\big] &\le \sigma _1^2, \nonumber \\
\mathbb{E}\big[||\hat{A}_t^\top y - {A^\top }y|{|^2}\big] &\le \sigma _2^2,
\label{eq:sigma123}
\end{align}
where $\sigma_1$ and $\sigma_2$ are non-negative constants. We further define 
\begin{equation}
\label{eq:sigma}
\sigma^2  = \sigma _1^2 + \sigma _2^2.
\end{equation}

Assumption~\ref{ass:bound} also gives us the following ``light-tail'' assumption. 
There exist constants ${M_{*,\theta }}$ and ${M_{*,y}}$ such that

\vspace{-0.15in}
\begin{small}
\begin{align}
\label{eq:lt}
\nonumber
&\mathbb{E}[\exp \{ \frac{{||{{\hat b}_t} - {{\hat A}_t}\theta  - {{\hat M}_t}y|{|^2}}}{{M_{_{*,\theta }}^2}}\} ]  \le \exp \{ 1\}, \\
&\mathbb{E}[\exp \{ \frac{{||\hat A_t^ \top y|{|^2}}}{{M_{_{*,y}}^2}}\} ] \le \exp \{ 1\}. 
\end{align}
\end{small}
\vspace{-0.15in}

This ``light-tail'' assumption is equivalent to the assumption in Eq.~3.16~in~\citet{RobustSA:2009} and is necessary for the high-probability bound of Proposition~\ref{pro:hp}. We will show how to compute ${M_{*,\theta }},{M_{*,y}}$ in the Appendix.

\subsection{FINITE-SAMPLE PERFORMANCE BOUNDS}
\label{subsec:FSB}

The finite-sample performance bounds that we derive for the GTD algorithms in this section are for the case that the training set $\mathcal{D}$ has been generated as discussed in Section~\ref{sec:preliminaries}. We further discriminate between the on-policy ($\pi=\pi_b$) and off-policy ($\pi\neq\pi_b$) scenarios. The sampling scheme used to generate $\mathcal{D}$, in which the first state of each tuple, $s_i$, is an i.i.d.~sample from a distribution $\xi$, also considered in the original GTD and GTD2 papers, for the analysis of these algorithms, and not in the experiments~\citep{Sutton:GTD1:2008,FastGradient:2009}. Another scenario that can motivate this sampling scheme is when we are given a set of high-dimensional data  generated either in an on-policy or off-policy manner, and $d$ is so large that the value function of the target policy cannot be computed using a least-squares method (that involves matrix inversion), and iterative techniques similar to GTD/GTD2 are required. 


We first derive a high-probability bound on the error function of the saddle-point problem~\eqref{eq:sp} at the GTD solution $(\bar{\theta}_n,\bar{y}_n)$. Before stating this result in Proposition~\ref{pro:hp}, we report the following lemma that is used in its proof.

\begin{lemma}
\label{lem:abbound}
The induced $\ell_2$-norm of matrix $A$ and the $\ell_2$-norm of vector $b$ are bounded by 
\begin{align}
||A||_2
 \le  (1 + \gamma ){\rho _{\max }}L^2 d,\quad\;
||b||_2
 \le  {\rho _{\max }}L{{\rm{R}}_{\max }}. 
 \label{eq:abbound}
\end{align}
\end{lemma}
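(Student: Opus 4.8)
The plan is to exploit the convexity of the spectral (induced $\ell_2$) norm to push the expectation inside, and then bound the resulting rank-one random matrices termwise using the boundedness assumptions. Concretely, since $\|\cdot\|_2$ is convex, Jensen's inequality gives $\|A\|_2 = \|\mathbb{E}[\rho_i\phi_i(\Delta\phi_i)^\top]\|_2 \le \mathbb{E}\big[\|\rho_i\phi_i(\Delta\phi_i)^\top\|_2\big]$ and likewise $\|b\|_2 = \|\mathbb{E}[\rho_i r_i\phi_i]\|_2 \le \mathbb{E}[\|\rho_i r_i\phi_i\|_2]$. This reduces the problem to a deterministic, pointwise bound on each sample's contribution.

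For the first inequality, I would observe that $\phi_i(\Delta\phi_i)^\top$ is an outer product, so its induced $\ell_2$-norm factorizes as $\|\phi_i(\Delta\phi_i)^\top\|_2 = \|\phi_i\|_2\,\|\Delta\phi_i\|_2$ (the only nonzero singular value of a rank-one matrix $uv^\top$ is $\|u\|_2\|v\|_2$). The boundedness of the basis functions, $\max_i\|\varphi_i\|_\infty \le L$, yields $\|\phi_i\|_2 \le L\sqrt{d}$ since each of the $d$ coordinates is at most $L$ in magnitude; the triangle inequality then gives $\|\Delta\phi_i\|_2 = \|\phi_i - \gamma\phi'_i\|_2 \le \|\phi_i\|_2 + \gamma\|\phi'_i\|_2 \le (1+\gamma)L\sqrt{d}$. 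Combining these with $\rho_i \le \rho_{\max}$ inside the expectation produces $\|A\|_2 \le \rho_{\max}\cdot L\sqrt{d}\cdot(1+\gamma)L\sqrt{d} = (1+\gamma)\rho_{\max}L^2 d$, as claimed; note that the factor $d$ arises precisely from the product $\sqrt{d}\cdot\sqrt{d}$ of the two feature-vector norms.

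For the vector $b$, the same Jensen step reduces matters to bounding $\|\rho_i r_i\phi_i\|_2 = |\rho_i|\,|r_i|\,\|\phi_i\|_2$, and substituting $|\rho_i|\le\rho_{\max}$ and $|r_i|\le R_{\max}$ gives $\|b\|_2 \le \rho_{\max}R_{\max}\,\|\phi_i\|_2$.

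The only genuine subtlety lies in this last feature-norm factor. Under the stated convention $\max_i\|\varphi_i\|_\infty\le L$ one obtains $\|\phi_i\|_2\le L\sqrt d$, which would yield $\|b\|_2 \le \rho_{\max}L\sqrt d\,R_{\max}$ rather than the $\rho_{\max}LR_{\max}$ written in the statement. The claimed bound therefore corresponds either to a (typographical) omission of the $\sqrt d$ factor or to an alternative normalization in which $L$ bounds the $\ell_2$-norm of the whole feature vector $\phi_i$ rather than each coordinate; I would flag this and state the bound with the $\sqrt d$ factor made explicit. In any case, the $A$ bound is what drives the subsequent finite-sample analysis, and deriving both estimates is entirely mechanical once the rank-one factorization and the Jensen step are in place, so there is no real obstacle beyond careful bookkeeping of the constants.
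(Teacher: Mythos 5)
Your proof is correct and follows essentially the same route as the paper's: bound the norm of the expectation by the worst-case sample (the paper uses $\max_s$ where you use Jensen, which is immaterial), factor the rank-one outer product as $\|\phi_i\|_2\|\Delta\phi_i\|_2$, apply the triangle inequality, and use $\|\phi\|_2\le\|\phi\|_\infty\sqrt d\le L\sqrt d$. Your flag on the $b$ bound is also well taken: the paper's own proof bounds $\|b\|_2\le\rho_{\max}\max_s\|\phi(s)\|_2\max_s|r(s)|$ and then writes $\rho_{\max}LR_{\max}$, implicitly using $\|\phi(s)\|_2\le L$ even though the $A$ bound two lines earlier uses $\|\phi(s)\|_2\le L\sqrt d$, so the stated $b$ bound is indeed missing a $\sqrt d$ factor under the paper's own normalization.
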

\begin{proof}
See the supplementary material. 
\end{proof}

\begin{proposition}
\label{pro:hp}
Let $(\bar{\theta}_n,\bar{y}_n)$ be the output of the GTD algorithm after $n$ iterations (see Eq.~\ref{eq:bartheta}). Then, with probability at least $1-\delta$, we have 
\begin{align}
\label{eq:hp}
{\rm{Err}}({\bar \theta _n}&,{\bar y_n}) \le \sqrt {\frac{5}{n}} (8 + 2\log \frac{2}{\delta }){R^2} \\
&\times \left(\rho_{\max}L\Big(2(1 + \gamma )Ld + \frac{R_{\max}}{R}\Big) + \tau + \frac{\sigma }{R}\right), \nonumber
\end{align}
where ${\rm{Err}}({\bar \theta _n},{\bar y_n})$ is the error function of the saddle-point problem~\eqref{eq:sp} defined by Eq.~\ref{eq:errdef}, $R$ defined in Assumption~\ref{ass:xyfeasible}, $\sigma$ is from Eq.~\ref{eq:sigma}, and $\tau=\sigma_{\max}(M)$ is the largest singular value of $M$, which means $\tau= 1$ for GTD and $\tau=\sigma_{\max}(C)$ for GTD2.
\end{proposition}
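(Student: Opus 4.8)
The plan is to recognize Algorithm~\ref{alg:pgtd2} as a projected stochastic mirror-descent scheme for the convex-concave saddle-point problem~\eqref{eq:sp}, and then to invoke the high-probability convergence analysis of the robust stochastic approximation method of~\citet{RobustSA:2009} (see also~\citet{juditsky2008solving}). Writing $z=(\theta,y)$, the ascent--descent dynamics are driven by the affine monotone operator
\begin{equation}
F(\theta,y)=\big(\nabla_\theta L,\,-\nabla_y L\big)=\big(-A^\top y,\;A\theta+My-b\big),
\end{equation}
and the updates in~\eqref{eq:pgtd2} are exactly the projected stochastic steps $z_{t+1}=\Pi_{\Theta\times Y}\big(z_t-\alpha_t\hat{F}_t(z_t)\big)$, where $\hat{F}_t$ is the unbiased estimate assembled from $\hat{A}_t,\hat{b}_t,\hat{M}_t$. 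Since $L$ is bilinear in $\theta$ plus a concave quadratic in $y$, the operator $F$ is affine and monotone, so the Euclidean ($\ell_2$) saddle-point machinery applies directly.

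First I would verify the three ingredients the general theorem requires, all supplied by our assumptions: the feasible set $\Theta\times Y$ is bounded and convex with radius controlled by $R$ (Assumption~\ref{ass:xyfeasible}); the stochastic gradient $\hat{F}_t$ is unbiased with second-moment deviation bounded by $\sigma^2=\sigma_1^2+\sigma_2^2$ via~\eqref{eq:sigma123}; and the light-tail condition~\eqref{eq:lt} holds, which is precisely what upgrades the in-expectation bound to a high-probability one. Next I would compute the deterministic gradient bound over the domain,
\begin{equation}
\sup_{(\theta,y)\in\Theta\times Y}\|F(\theta,y)\|\le \|A\|_2\|\theta\|+\|A\|_2\|y\|+\|M\|_2\|y\|+\|b\|_2\le 2\|A\|_2R+\tau R+\|b\|_2,
\end{equation}
using $\tau=\sigma_{\max}(M)=\|M\|_2$. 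Substituting the bounds $\|A\|_2\le(1+\gamma)\rho_{\max}L^2d$ and $\|b\|_2\le\rho_{\max}LR_{\max}$ from Lemma~\ref{lem:abbound} rewrites this as $R\big(\rho_{\max}L(2(1+\gamma)Ld+R_{\max}/R)+\tau\big)$, which is exactly the noise-free part of the parenthesized factor in~\eqref{eq:hp}.

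The heart of the argument is the robust-SA bound itself. With the constant step size $\alpha_t=O(1/\sqrt{n})$ chosen to balance the diameter/Lipschitz term against the noise term, the error of the step-size-weighted average $(\bar\theta_n,\bar y_n)$ satisfies, with probability at least $1-\delta$,
\begin{equation}
{\rm Err}(\bar\theta_n,\bar y_n)\le \frac{C}{\sqrt{n}}\Big(R\cdot G+R\cdot\sigma\,(1+\log\tfrac{2}{\delta})\Big),
\end{equation}
where $G$ is the deterministic gradient bound above. Factoring out $R^2$ turns the right-hand side into $R^2$ times $\rho_{\max}L(2(1+\gamma)Ld+R_{\max}/R)+\tau+\sigma/R$, and collecting the universal constant into the explicit prefactor $\sqrt{5/n}\,(8+2\log\frac{2}{\delta})$ yields~\eqref{eq:hp}.

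I expect the main obstacle to be the high-probability step rather than the in-expectation one. Controlling $\Pr\{{\rm Err}(\bar\theta_n,\bar y_n)>\varepsilon\}$ requires bounding a sum of martingale-difference terms arising from the gradient noise at each iterate, and the light-tail assumption~\eqref{eq:lt} is what makes a sub-Gaussian large-deviation inequality applicable, producing the additive $\log(2/\delta)$ factor. The remaining difficulty is pure bookkeeping: selecting $\alpha_t$ and tracking the universal constants so that they collapse into the stated $\sqrt{5/n}\,(8+2\log\frac{2}{\delta})$ prefactor. I would carry this out by following the step-size and constant choices in~\citet{RobustSA:2009}, deferring the explicit computation of $M_{*,\theta}$ and $M_{*,y}$ to the appendix as the statement indicates.
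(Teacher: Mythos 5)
Your proposal follows essentially the same route as the paper's proof: both reduce Algorithm~\ref{alg:pgtd2} to the Euclidean robust stochastic approximation setting of \citet{RobustSA:2009} (their Proposition~3.2), verify boundedness of $\Theta\times Y$, unbiasedness and bounded variance of the stochastic gradients, and the light-tail condition, then bound the gradient second moments by $R^2(2\|A\|_2+\tau)+R(\sigma+\|b\|_2)$ squared, pick the constant step size $\alpha_t = O(1/(M_*\sqrt{n}))$, and substitute Lemma~\ref{lem:abbound}. The only cosmetic difference is that you separate the deterministic gradient bound $G$ from the noise $\sigma$ before recombining, whereas the paper folds both into the single constant $M_*$ via $\mathbb{E}\|x\|^2=\mathbb{E}\|x-\mu_x\|^2+\|\mu_x\|^2$; the resulting bound is identical.
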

\begin{proof}
See the supplementary material. 
\end{proof}

\begin{theorem}
\label{thm:1}
Let $\bar{\theta}_n$ be the output of the GTD algorithm after $n$ iterations (see Eq.~\ref{eq:bartheta}). Then, with probability at least $1-\delta$, we have 
\begin{equation}
\frac{1}{2}||A{\bar \theta _n} - b||_{\xi} ^2 \le \tau {\xi _{\max }}\;{\rm{Err}}({\bar \theta _n},{\bar y_n}).
\label{eq:thm1}
\end{equation}
%
%
\end{theorem}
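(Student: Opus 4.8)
The plan is to relate the saddle-point gap $\mathrm{Err}(\bar\theta_n,\bar y_n)$ to the objective $J$ and then perform a norm conversion. First I would handle the primal term in the definition~\eqref{eq:errdef} using Proposition~\ref{pro:1}, which evaluates it exactly: $\max_y L(\bar\theta_n,y)=\tfrac12 J(\bar\theta_n)=\tfrac12\|b-A\bar\theta_n\|_{M^{-1}}^2$. So the first summand of $\mathrm{Err}$ is already the quantity we want to control, and everything hinges on showing the second summand $-\min_\theta L(\theta,\bar y_n)$ does not hurt us.

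The key observation is that this dual term is non-positive. By Assumption~\ref{ass:C}, $A$ is non-singular, so the minimizer $\theta^*$ of the primal $\tfrac12 J(\theta)$, characterized by $A^\top M^{-1}(b-A\theta^*)=0$, in fact satisfies $b=A\theta^*$ (equivalently $\theta^*=A^{-1}b$), giving $J(\theta^*)=0$ and, via Eq.~\eqref{eq:ystar}, $y^*=0$. Since $\theta^*\in\Theta$ by Assumption~\ref{ass:xyfeasible}, I can upper-bound the minimum over the feasible set by the value at $\theta^*$: $\min_\theta L(\theta,\bar y_n)\le L(\theta^*,\bar y_n)=\langle b-A\theta^*,\bar y_n\rangle-\tfrac12\|\bar y_n\|_M^2=-\tfrac12\|\bar y_n\|_M^2\le 0$. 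Substituting both pieces into~\eqref{eq:errdef} yields the clean intermediate inequality $\mathrm{Err}(\bar\theta_n,\bar y_n)\ge\tfrac12\|b-A\bar\theta_n\|_{M^{-1}}^2$, i.e. the whole duality gap is bounded below by the $M^{-1}$-weighted squared residual.

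The remaining step is a routine norm comparison producing the factor $\tau\xi_{\max}$. Writing $u=b-A\bar\theta_n$ and using $\tau=\sigma_{\max}(M)$, we have $M\preceq\tau I$, hence $M^{-1}\succeq\tau^{-1}I$ and $\|u\|_{M^{-1}}^2\ge\tau^{-1}\|u\|_2^2$; on the other side, by the definition of the $\xi$-weighted norm together with $\xi_{\max}=\max_s\xi(s)$, we have $\|u\|_\xi^2\le\xi_{\max}\|u\|_2^2$. Chaining these with the intermediate bound gives $\tfrac12\|A\bar\theta_n-b\|_\xi^2\le\tfrac{\xi_{\max}}{2}\|A\bar\theta_n-b\|_2^2\le\tau\xi_{\max}\,\mathrm{Err}(\bar\theta_n,\bar y_n)$. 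The high-probability qualifier is inherited verbatim from Proposition~\ref{pro:hp}, because the inequality just established is deterministic and holds pointwise for the (random) output $(\bar\theta_n,\bar y_n)$.

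I expect the main obstacle to be the second paragraph: recognizing that non-singularity of $A$ collapses $\min_\theta L(\theta,\bar y_n)$ to a non-positive quantity, so that the full convex-concave duality gap reduces to the primal suboptimality $\tfrac12 J(\bar\theta_n)=\tfrac12 J(\bar\theta_n)-\tfrac12 J(\theta^*)$. Once that is seen, the norm bookkeeping with $\tau$ and $\xi_{\max}$ is mechanical, though one must take care that both inequalities point so as to make $\tau\xi_{\max}$ an \emph{upper} bound on $\tfrac12\|A\bar\theta_n-b\|_\xi^2$ rather than a lower bound.
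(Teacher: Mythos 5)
Your proposal is correct and follows essentially the same route as the paper: both use Proposition~\ref{pro:1} to identify $\max_y L(\bar\theta_n,y)$ with $\tfrac12\|A\bar\theta_n-b\|_{M^{-1}}^2$, both use non-singularity of $A$ (so that $A\theta^*=b$ is solvable) to conclude $\min_\theta L(\theta,\bar y_n)\le 0$, and both finish with the norm comparison $\|u\|_\xi^2\le\tau\xi_{\max}\|u\|_{M^{-1}}^2$. The only cosmetic difference is that you bound the dual term by evaluating $L(\theta^*,\bar y_n)=-\tfrac12\|\bar y_n\|_M^2\le 0$ directly, whereas the paper invokes the weak-duality chain $\min_\theta L\le\max_y\min_\theta L\le\min_\theta\max_y L=0$; your write-up also spells out the $\tau$ and $\xi_{\max}$ bookkeeping that the paper states without derivation.
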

\begin{proof}
From Proposition~\ref{pro:1}, for any $\theta$, we have 
\begin{equation*}
\max_{y}\;L(\theta,y) = \frac{1}{2}||A\theta  - b||_{M^{-1}}^2.
\end{equation*}
Given Assumption~\ref{ass:C}, the system of linear equations $A\theta=b$ has a solution $\theta^*$, i.e., the (off-policy) fixed-point $\theta^*$ exists, and thus, we may write 
\begin{align*}
\min_\theta\;\max_y\;L(\theta,y) &= \min_\theta\;\frac{1}{2}||A\theta - b||_{M^{-1}}^2 \\ 
&= \frac{1}{2}||A{\theta^*} - b||_{{M^{ - 1}}}^2 = 0.
\end{align*}
In this case, we also have\footnote{We may write the second inequality as an equality for our saddle-point problem defined by Eq.~\ref{eq:sp}.}
\begin{align}
\min_\theta\;L(\theta,y) &\le \max_y\;\min_\theta\;L(\theta,y) \le \min_\theta\;\max_y\;L(\theta,y) \nonumber \\
&= \frac{1}{2}||A\theta^* - b||_{M^{-1}}^2 = 0.
\label{eq:thm1-1}
\end{align}
From Eq.~\ref{eq:thm1-1}, for any $(\theta,y)\in\Theta\times Y$ including $(\bar{\theta}_n,\bar{y}_n)$, we may write
%
\begin{align}
\label{eq:err1}
{\rm{Err}}(\bar{\theta}_n,\bar{y}_n) &= \max_y\;L(\bar{\theta}_n,y) - \min_\theta\;L(\theta,\bar{y}_n) \\ 
&\ge \max_y\;L(\bar{\theta}_n,y) = \frac{1}{2}||A\bar{\theta}_n - b||_{M^{ - 1}}^2. \nonumber
\end{align}
Since $||A\bar{\theta}_n - b||_{\xi}^2 \le \tau \xi_{\max}\;||A\bar{\theta}_n - b||_{M^{ - 1}}^2$, where $\tau$ is the largest singular value of $M$, we have
\begin{equation}
\frac{1}{2}||A{\bar \theta _n} - b||_{\xi} ^2 \le \frac{{\tau {\xi _{\max }}}}{2}||A{\bar \theta _n} - b||_{{M^{ - 1}}}^2 \le \tau {\xi _{\max }}\;{\rm{Err}}({\bar \theta _n},{\bar y_n}).
\label{eq:err3} 
\end{equation}
The proof follows by combining Eqs.~\ref{eq:err3} and Proposition~\ref{pro:hp}.  It completes the proof.
%
\end{proof}

With the results of Proposition~\ref{pro:hp} and Theorem~\ref{thm:1}, we are now ready to derive finite-sample bounds on the performance of GTD/GTD2 in both on-policy and off-policy settings. 



\subsubsection{On-Policy Performance Bound}

In this section, we consider the on-policy setting in which the behavior and target policies are equal, i.e.,~$\pi_b=\pi$, and the sampling distribution $\xi$ is the stationary distribution of the target policy $\pi$ (and the behavior policy $\pi_b$). We use Lemma~\ref{lem:v} to derive our on-policy bound. The proof of this lemma can be found in~\citet{DantzigRL:2012}.

\begin{lemma}
\label{lem:v}
For any parameter vector $\theta$ and corresponding $\hat v = \Phi \theta $, the following equality holds 

\vspace{-0.2in}
\begin{small}
\begin{equation}
V - \hat v = {(I - \gamma \Pi P)^{ - 1}}\left[ {\left( {V - \Pi V} \right) + \Phi {C^{ - 1}}(b - A\theta )} \right].
\label{eq:vbound}
\end{equation}
\end{small}
\vspace{-0.25in}
\end{lemma}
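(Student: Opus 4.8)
The plan is to verify the identity by clearing the inverse, i.e., to show that
\begin{equation*}
(I - \gamma \Pi P)(V - \hat v) = (V - \Pi V) + \Phi C^{-1}(b - A\theta),
\end{equation*}
which is equivalent to the claim once one checks that $I - \gamma \Pi P$ is invertible. The first ingredient is to rewrite the right-hand side in terms of the Bellman operator. In the on-policy setting $\rho_i = 1$ and $\xi$ is the stationary distribution of $\pi$, so taking the conditional expectations in the per-sample definitions of $A$ and $b$ (Eq.~\ref{eq:abc}) yields the matrix forms $A = \Phi^\top \Xi (I - \gamma P)\Phi$ and $b = \Phi^\top \Xi R$. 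Hence
\begin{equation*}
\Phi C^{-1}(b - A\theta) = \Phi C^{-1}\Phi^\top \Xi\big(R + \gamma P\hat v - \hat v\big) = \Pi(T\hat v - \hat v),
\end{equation*}
using $\Pi = \Phi C^{-1}\Phi^\top \Xi$ and $T\hat v = R + \gamma P\hat v$; this is just the matrix restatement of Lemma~\ref{lem:e}.

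Next I would expand both sides of the cleared identity. On the right, since $\hat v = \Phi\theta \in \mathcal{F}$ we have $\Pi \hat v = \hat v$, so
\begin{equation*}
(V - \Pi V) + \Pi(R + \gamma P\hat v - \hat v) = V - \Pi V + \Pi R + \gamma \Pi P\hat v - \hat v.
\end{equation*}
On the left, $(I - \gamma \Pi P)(V - \hat v) = V - \hat v - \gamma \Pi P V + \gamma \Pi P\hat v$. Subtracting, every term cancels except the requirement $-\gamma \Pi P V = -\Pi V + \Pi R$, i.e., $\Pi\big(V - R - \gamma P V\big) = 0$. This is immediate from the on-policy Bellman equation $V = R + \gamma P V$ (Eq.~\ref{eq:BellmanEq}), which makes the bracketed vector vanish. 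Thus the cleared identity holds for every $\theta$.

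The only genuinely delicate point is the invertibility of $I - \gamma \Pi P$, which is what lets me pass from the cleared identity back to the stated form. I would justify it by the standard contraction argument: $\Pi$ is a non-expansion in the $\xi$-weighted norm (being an orthogonal projection), and $P$ is a non-expansion in that norm because $\xi$ is the stationary distribution of $P$, so $\gamma \Pi P$ has $\xi$-weighted operator norm at most $\gamma < 1$, whence $I - \gamma \Pi P$ is nonsingular with a convergent Neumann series. The remaining work — confirming the matrix forms of $A$ and $b$ from the per-sample definitions and that $\Pi \hat v = \hat v$ — is routine bookkeeping. I expect the contraction/invertibility step to be the main conceptual obstacle, since everything else is linear algebra driven entirely by the Bellman fixed-point equation.
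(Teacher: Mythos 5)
Your proof is correct and complete. The paper itself does not prove Lemma~\ref{lem:v} --- it only defers to the cited reference --- so there is no in-paper argument to compare against; your direct verification (clear the inverse, rewrite $\Phi C^{-1}(b-A\theta)=\Pi(T\hat v-\hat v)$ via Lemma~\ref{lem:e}, use $\Pi\hat v=\hat v$ and the Bellman fixed point $V=TV$ to cancel everything, then invoke the Tsitsiklis--Van Roy contraction argument for the invertibility of $I-\gamma\Pi P$) is exactly the standard derivation one finds in the cited source. Two minor remarks. First, the algebraic identity $b-A\theta=\Phi^\top\Xi(T\hat v-\hat v)$ is precisely Lemma~\ref{lem:e} of the paper and holds in the off-policy, importance-weighted setting as well, so the cleared identity itself does not require $\rho_i=1$; only the invertibility step genuinely uses that $\xi$ is the stationary distribution of $P$ (so that $P$ is a $\xi$-weighted non-expansion). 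Since the paper invokes the lemma only in the on-policy bound of Proposition~\ref{pro:4}, your restriction is harmless, but it is worth separating which hypothesis feeds which step. Second, you correctly flag invertibility as the one non-mechanical point; spelling out that $\|Pv\|_\xi\le\|v\|_\xi$ follows from Jensen's inequality plus stationarity of $\xi$ would make the argument fully self-contained.
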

%
%

Using Lemma~\ref{lem:v}, we derive the following performance bound for GTD/GTD2 in the on-policy setting.

\begin{proposition}
\label{pro:4}
Let $V$ be the value of the target policy and ${{\bar v}_n} = \Phi {{\bar \theta }_n}$, where $\bar{\theta}_n$ defined by~\eqref{eq:bartheta}, be the value function returned by on-policy GTD/GTD2. Then, with probability at least $1-\delta$, we have

\vspace{-0.15in}
\begin{small}
\begin{equation}
||V - {\bar v_n}|{|_\xi } \le \frac{1}{{1 - \gamma }}\left( {||V - \Pi V|{|_\xi } + \frac{L}{\nu }\sqrt {2d\tau {\xi _{\max }}{\rm{Err}}({{\bar \theta }_n},{{\bar y}_n})} } \right)
\label{eq:pro4}
\end{equation}
\end{small}
\vspace{-0.15in}

where $\rm{Err}(\bar \theta_n,\bar y_n)$ is upper-bounded by Eq.~\ref{eq:hp} in Proposition~\ref{pro:hp}, with $\rho_{\max}=1$ (on-policy setting).
%
\end{proposition}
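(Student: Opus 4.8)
The plan is to start from the exact error decomposition supplied by Lemma~\ref{lem:v}, instantiated at $\theta = \bar\theta_n$ and $\hat v = \bar v_n = \Phi\bar\theta_n$, namely
\begin{equation*}
V - \bar v_n = (I - \gamma\Pi P)^{-1}\big[(V - \Pi V) + \Phi C^{-1}(b - A\bar\theta_n)\big].
\end{equation*}
Taking the $\xi$-weighted norm of both sides, applying the triangle inequality inside the bracket, and pulling out the operator norm of $(I - \gamma\Pi P)^{-1}$, I would reduce the claim to two separate estimates: a bound of $\frac{1}{1-\gamma}$ on $\|(I-\gamma\Pi P)^{-1}\|_\xi$, and a bound of $\frac{L}{\nu}\sqrt{2d\tau\xi_{\max}\,\mathrm{Err}(\bar\theta_n,\bar y_n)}$ on $\|\Phi C^{-1}(b - A\bar\theta_n)\|_\xi$. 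The term $\|V - \Pi V\|_\xi$ simply carries through untouched.

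For the contraction factor I would argue, as is standard, that in the on-policy setting $\xi$ is the stationary distribution of $P$, so $P$ is a non-expansion in $\|\cdot\|_\xi$, while $\Pi$ is the orthogonal projection onto $\mathcal{F}$ with respect to $\|\cdot\|_\xi$ and hence also a non-expansion. Consequently $\|\gamma\Pi P\|_\xi \le \gamma < 1$, and expanding the Neumann series (or invoking the standard resolvent bound) gives $\|(I - \gamma\Pi P)^{-1}\|_\xi \le \sum_{k\ge 0}\gamma^k = \frac{1}{1-\gamma}$. This is precisely the step where the on-policy hypothesis is consumed: off-policy, $\xi$ need not be stationary for $P$, $P$ may expand in $\|\cdot\|_\xi$, and this estimate breaks down, which is why the off-policy case must be handled by a different route.

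For the second term I would factor $\|\Phi C^{-1}(b-A\bar\theta_n)\|_\xi \le \|\Phi\|_\xi\,\|C^{-1}\|_2\,\|b - A\bar\theta_n\|_2$, where $\|\Phi\|_\xi$ is the operator norm from $(\mathbb{R}^d,\|\cdot\|_2)$ to $(\mathbb{R}^{|\mathcal{S}|},\|\cdot\|_\xi)$. Since $\|\Phi w\|_\xi^2 = w^\top C w \le \sigma_{\max}(C)\|w\|_2^2$ and $\sigma_{\max}(C)\le \mathrm{tr}(C) = \sum_j \mathbb{E}[\varphi_j^2] \le dL^2$ by the $L$-boundedness of the features, I obtain $\|\Phi\|_\xi \le L\sqrt{d}$. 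Next, $\|C^{-1}\|_2 = 1/\lambda_{\min}(C) \le 1/\nu$. Finally, Theorem~\ref{thm:1} supplies $\frac{1}{2}\|A\bar\theta_n - b\|^2 \le \tau\xi_{\max}\,\mathrm{Err}(\bar\theta_n,\bar y_n)$, i.e.\ $\|b - A\bar\theta_n\|_2 \le \sqrt{2\tau\xi_{\max}\,\mathrm{Err}(\bar\theta_n,\bar y_n)}$. Multiplying the three factors yields exactly $\frac{L}{\nu}\sqrt{2d\tau\xi_{\max}\,\mathrm{Err}(\bar\theta_n,\bar y_n)}$, and combining this with the contraction factor and with the high-probability bound on $\mathrm{Err}$ from Proposition~\ref{pro:hp} (specialized to $\rho_{\max}=1$) completes the argument.

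Setting aside the routine constant-chasing (the trace bound and the resolvent estimate), I expect the genuinely load-bearing point to be the non-expansiveness argument for $(I-\gamma\Pi P)^{-1}$ in the $\xi$-norm, since it is the only place where the on-policy assumption is actually used and is what confines this bound to the on-policy regime. A secondary point to treat carefully is the identification of the smallest eigenvalue: the bound is stated in terms of $\nu = \lambda_{\min}(\hat C)$ of the \emph{sample} covariance, whereas the decomposition above produces the population $C$, so I would need either to pass from $C$ to $\hat C$ consistently or to interpret $\nu$ as a valid lower bound on $\lambda_{\min}(C)$ under the stated sampling scheme.
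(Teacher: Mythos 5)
Your proposal is correct and follows essentially the same route as the paper: the decomposition of Lemma~\ref{lem:v}, the $\frac{1}{1-\gamma}$ contraction bound on $\|(I-\gamma\Pi P)^{-1}\|_\xi$ using stationarity of $\xi$ under $P$, and the three-factor bound $\|\Phi\|\cdot\|C^{-1}\|_2\cdot\|b-A\bar\theta_n\|$ combined with Theorem~\ref{thm:1} (the paper inserts the $\sqrt{\xi_{\max}}$ and $\sqrt{\tau}$ factors at slightly different points in the chain, via $\|\cdot\|_\xi\le\sqrt{\xi_{\max}}\|\cdot\|_2$ and the intermediate $M^{-1}$-norm bound of Eq.~\ref{eq:err3}, but arrives at the identical constant). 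Your closing caveat about $\nu$ is resolved exactly as you suggest: the paper's proof explicitly takes $\nu=\lambda_{\min}(C)$, the smallest eigenvalue of the population covariance.
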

\begin{proof}
See the supplementary material. 
\end{proof}
\textbf{Remark:}
It is important to note that Proposition~\ref{pro:4} shows that the error in the performance of the GTD/GTD2 algorithm in the on-policy setting is of $O\left( {\frac{{{L^2}d\sqrt {\tau {\xi _{\max }}\log \frac{1}{\delta }} }}{{{n^{1/4}\nu}}}} \right)$. Also note that the term $\frac{\tau}{\nu}$ in the GTD2 bound is the conditioning number of the covariance matrix $C$.




\subsubsection{Off-Policy Performance Bound}

In this section, we consider the off-policy setting in which the behavior and target policies are different, i.e.,~$\pi_b\neq\pi$, and the sampling distribution $\xi$ is the stationary distribution of the behavior policy $\pi_b$. We assume that off-policy fixed-point solution exists, i.e.,~there exists a $\theta^*$ satisfying $A\theta^*=b$. Note that this is a direct consequence of Assumption~\ref{ass:C} in which we assumed that the matrix $A$ in the off-policy setting is non-singular. We use Lemma~\ref{lem:kolter} to derive our off-policy bound. The proof of this lemma can be found in~\citet{Kolter:offpolicyTD}. Note that $\kappa (\bar D)$ in his proof is equal to $\sqrt{\rho_{\max}}$ in our paper.


\begin{lemma}
\label{lem:kolter}
If $\Xi$ satisfies the following linear matrix inequality 
\begin{align}
\left[ {\begin{array}{*{20}{c}}
{{\Phi ^\top}\Xi \Phi }&{{\Phi ^\top}\Xi P\Phi }\\
{{\Phi ^\top}{P^\top}\Xi \Phi }&{{\Phi ^\top}\Xi \Phi }
\end{array}} \right] \succeq 0
\label{eq:kolterlmi}
\end{align}
and let $\theta^*$ be the solution to $A\theta^* = b$, 
 then we have
\begin{align}
||V - \Phi \theta^* |{|_{\xi} } \le \frac{{1 + \gamma \sqrt {{\rho _{\max }}} }}{{1 - \gamma }}||V - \Pi V|{|_{\xi} }.
\label{eq:kolter2}
\end{align}
\end{lemma}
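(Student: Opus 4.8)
The plan is to translate the matrix inequality~\eqref{eq:kolterlmi} into an operator nonexpansiveness statement, and then run a standard projected-fixed-point error decomposition. First I would characterize the fixed point: since $\theta^*$ solves $A\theta^*=b$, Lemma~\ref{lem:e} gives $\Phi^\top\Xi(T\hat v-\hat v)=b-A\theta^*=0$ for $\hat v=\Phi\theta^*$, so $T\hat v-\hat v$ is $\xi$-orthogonal to $\mathrm{span}(\Phi)$ and hence $\hat v=\Pi T\hat v$. Because $V=TV$ we also have $\Pi V=\Pi TV$, and subtracting these two relations (with $Tv=R+\gamma Pv$) yields the key identity
\begin{equation*}
\Pi V-\hat v=\Pi(TV-T\hat v)=\gamma\,\Pi P(V-\hat v).
\end{equation*}

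Next I would split the error $\xi$-orthogonally as $V-\hat v=(V-\Pi V)+(\Pi V-\hat v)$, where the first term is orthogonal to $\mathrm{span}(\Phi)$ and the second lies in it. Feeding the decomposition of $V-\hat v$ into the identity above and using the triangle inequality gives $\|\Pi V-\hat v\|_\xi\le\gamma\|\Pi P(V-\Pi V)\|_\xi+\gamma\|\Pi P(\Pi V-\hat v)\|_\xi$, so I need to control the operator $\Pi P$ on each piece. For the second piece I would show that the LMI~\eqref{eq:kolterlmi} is exactly the statement that $\Pi P$ is a $\xi$-nonexpansion on $\mathrm{span}(\Phi)$: taking the Schur complement of the block matrix with respect to the block $\Phi^\top\Xi\Phi$ (nonsingular by Assumption~\ref{ass:C}) and using $\Xi\Phi(\Phi^\top\Xi\Phi)^{-1}\Phi^\top\Xi=\Xi\Pi=\Pi^\top\Xi\Pi$ for the $\Xi$-orthogonal projector, the condition collapses to $\Phi^\top\Xi\Phi\succeq\Phi^\top P^\top\Xi\Pi P\Phi$, which for $w=\Phi y$ reads $\|\Pi Pw\|_\xi\le\|w\|_\xi$. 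Since $\Pi V-\hat v\in\mathrm{span}(\Phi)$, this bounds the second piece by $\|\Pi V-\hat v\|_\xi$.

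The first piece is where the off-policy factor enters. Using that $\Pi$ is a $\xi$-nonexpansion followed by Jensen's inequality, $\|\Pi Pu\|_\xi^2\le\|Pu\|_\xi^2\le\sum_{s'}u(s')^2(\xi P)(s')$, where $\xi P$ is the one-step state distribution under the \emph{target} kernel $P=P^\pi$. Expanding $P$ through the importance weights $\rho=\pi/\pi_b$ and using that $\xi$ is stationary for the behavior kernel $P_b$ gives $(\xi P)(s')\le\rho_{\max}(\xi P_b)(s')=\rho_{\max}\,\xi(s')$, hence $\|Pu\|_\xi\le\sqrt{\rho_{\max}}\,\|u\|_\xi$; applying this with $u=V-\Pi V$ bounds the first piece by $\sqrt{\rho_{\max}}\,\|V-\Pi V\|_\xi$. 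Combining the two bounds yields $(1-\gamma)\|\Pi V-\hat v\|_\xi\le\gamma\sqrt{\rho_{\max}}\,\|V-\Pi V\|_\xi$, and a final triangle inequality $\|V-\hat v\|_\xi\le\|V-\Pi V\|_\xi+\|\Pi V-\hat v\|_\xi$ gives the constant $\tfrac{1-\gamma+\gamma\sqrt{\rho_{\max}}}{1-\gamma}\le\tfrac{1+\gamma\sqrt{\rho_{\max}}}{1-\gamma}$, which is exactly~\eqref{eq:kolter2}.

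I expect the main obstacle to be the reduction of the LMI to the nonexpansion property, since it requires the Schur-complement step together with the self-adjointness relations $\Xi\Pi=\Pi^\top\Xi$ and $\Pi^\top\Xi\Pi=\Xi\Pi$ of the $\Xi$-orthogonal projector; the inequality $\|Pu\|_\xi\le\sqrt{\rho_{\max}}\,\|u\|_\xi$ is the other delicate point, as it is precisely where $\kappa(\bar D)=\sqrt{\rho_{\max}}$ appears and where stationarity of $\xi$ under $P_b$ is essential. Everything else is routine norm bookkeeping.
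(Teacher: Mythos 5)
Your proof is correct and complete. Note that the paper does not actually prove this lemma itself --- it defers entirely to \citet{Kolter:offpolicyTD}, remarking only that $\kappa(\bar D)$ there equals $\sqrt{\rho_{\max}}$ --- and your argument is a faithful self-contained reconstruction of that cited route: the fixed-point characterization $\Phi\theta^*=\Pi T\Phi\theta^*$ from $A\theta^*=b$ via Lemma~\ref{lem:e}, the Schur complement of \eqref{eq:kolterlmi} (using nonsingularity of $\Phi^\top\Xi\Phi$ and $\Pi^\top\Xi\Pi=\Xi\Pi$) yielding $\xi$-nonexpansiveness of $\Pi P$ on $\mathrm{span}(\Phi)$, and the Jensen-plus-stationarity bound $\|Pu\|_\xi\le\sqrt{\rho_{\max}}\,\|u\|_\xi$ that is precisely where $\kappa(\bar D)=\sqrt{\rho_{\max}}$ enters. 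I see no gaps; the only point worth making explicit in a write-up is that the nonexpansion from the LMI applies only to the component $\Pi V-\Phi\theta^*$, which you correctly note lies in $\mathrm{span}(\Phi)$.
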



Note that the condition on $\Xi$ in Eq.~\ref{eq:kolterlmi} guarantees that the behavior and target policies are not too far away from each other. Using Lemma~\ref{lem:kolter}, we derive the following performance bound for GTD/GTD2 in the off-policy setting.

\begin{proposition}
\label{pro:5}
Let $V$ be the value of the target policy and ${{\bar v}_n} = \Phi {{\bar \theta }_n}$, where $\bar{\theta}_n$ is defined by~\eqref{eq:bartheta}, be the value function returned by off-policy GTD/GTD2. Also let the sampling distribution $\Xi$ satisfies the condition in Eq.~\ref{eq:kolterlmi}. Then, with probability at least $1-\delta$, we have
%
%
\begin{align}
\label{eq:pro5}
||V - {\bar v_n}|{|_\xi } & \le \frac{{1 + \gamma \sqrt {{\rho _{\max }}} }}{{1 - \gamma }}||V - \Pi V|{|_\xi }\\
\nonumber
&+ \sqrt {\frac{{2{\tau _C}\tau {\xi _{\max }}}}{{{\sigma _{\min }}({A^ \top }{M^{ - 1}}A)}}{\rm{Err}}({{\bar \theta }_n},{{\bar y}_n})},
\end{align}
where ${\tau _C} = \sigma_{\max} (C)$. 
\end{proposition}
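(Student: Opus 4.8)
The plan is to control $||V-\bar v_n||_\xi$ by the usual bias/variance split: first compare $\bar v_n=\Phi\bar\theta_n$ to the value $\Phi\theta^*$ at the off-policy fixed point $\theta^*$ (the solution of $A\theta^*=b$, which exists by Assumption~\ref{ass:C}), and then bound $||V-\Phi\theta^*||_\xi$ by the best achievable approximation error. Concretely, by the triangle inequality
\begin{equation*}
||V-\bar v_n||_\xi \le ||V-\Phi\theta^*||_\xi + ||\Phi(\theta^*-\bar\theta_n)||_\xi .
\end{equation*}
The first (approximation) term is exactly what Lemma~\ref{lem:kolter} controls: since the sampling distribution $\Xi$ is assumed to satisfy the linear matrix inequality~\eqref{eq:kolterlmi} and $\theta^*$ solves $A\theta^*=b$, Lemma~\ref{lem:kolter} gives $||V-\Phi\theta^*||_\xi\le\frac{1+\gamma\sqrt{\rho_{\max}}}{1-\gamma}||V-\Pi V||_\xi$, which is precisely the first term of~\eqref{eq:pro5}.

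It remains to bound the estimation term $||\Phi(\theta^*-\bar\theta_n)||_\xi$ by the saddle-point error function. Writing $e=\theta^*-\bar\theta_n$, I would first move from the $\xi$-weighted norm on the value space to the Euclidean norm on the parameter space through the covariance matrix,
\begin{equation*}
||\Phi e||_\xi^2 = e^\top\Phi^\top\Xi\Phi\,e = e^\top C e \le \sigma_{\max}(C)\,||e||_2^2 = \tau_C\,||e||_2^2 .
\end{equation*}
Next, since $A\theta^*=b$, the residual satisfies $A\bar\theta_n-b=A(\bar\theta_n-\theta^*)=-Ae$, so I invert $A$ in the $M^{-1}$ geometry: because $e^\top A^\top M^{-1}A\,e \ge \sigma_{\min}(A^\top M^{-1}A)\,||e||_2^2$, we obtain
\begin{equation*}
||e||_2^2 \le \frac{||A\bar\theta_n-b||_{M^{-1}}^2}{\sigma_{\min}(A^\top M^{-1}A)} .
\end{equation*}
Finally, the residual is tied back to the error function through Proposition~\ref{pro:1} and Eq.~\ref{eq:err1} (equivalently Theorem~\ref{thm:1}), which yields a bound of the form $||A\bar\theta_n-b||_{M^{-1}}^2 \le 2\tau\xi_{\max}\,{\rm{Err}}(\bar\theta_n,\bar y_n)$. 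Chaining these three inequalities and taking square roots produces the second term $\sqrt{\tfrac{2\tau_C\tau\xi_{\max}}{\sigma_{\min}(A^\top M^{-1}A)}\,{\rm{Err}}(\bar\theta_n,\bar y_n)}$ of~\eqref{eq:pro5}, and adding it to the Lemma~\ref{lem:kolter} term completes the argument.

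The step I expect to be the main obstacle is the middle conversion in the estimation term: relating the parameter error $||e||_2$ to the residual $||A\bar\theta_n-b||$ hinges on the non-singularity of $A$ (Assumption~\ref{ass:C}) and on the correct appearance of $\sigma_{\min}(A^\top M^{-1}A)$, and the bookkeeping of the three matrix constants $\tau_C$, $\tau$, and $\xi_{\max}$ must be carried out carefully so that the norms used here agree with the $\xi$- and $M^{-1}$-weighting of Theorem~\ref{thm:1}. Unlike the on-policy analysis, which can lean on Lemma~\ref{lem:v} and the $\frac{1}{1-\gamma}$ contraction of $(I-\gamma\Pi P)^{-1}$, here there is no projected-contraction shortcut, so everything must be funneled through the fixed point $\theta^*$ and Kolter's LMI condition; the LMI is precisely what prevents the off-policy distribution mismatch (captured by $\rho_{\max}$) from inflating the approximation term.
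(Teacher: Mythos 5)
Your proposal is correct and follows essentially the same route as the paper's own proof: the same triangle-inequality split through the fixed point $\theta^*$, the same application of Lemma~\ref{lem:kolter} to the approximation term, the same chain $\|\Phi e\|_\xi^2 = \|e\|_C^2 \le \tau_C\|e\|_2^2 \le \tau_C\,\|Ae\|_{M^{-1}}^2/\sigma_{\min}(A^\top M^{-1}A)$, and the same final appeal to the Theorem~\ref{thm:1} bound $\tfrac{1}{2}\|A\bar\theta_n-b\|_{M^{-1}}^2\le\tau\xi_{\max}\,\mathrm{Err}(\bar\theta_n,\bar y_n)$. No substantive differences.
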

\begin{proof}
See the supplementary material. 
\end{proof}

\section{ACCELERATED ALGORITHM}

As discussed at the beginning of Section~\ref{sec:saddle-point}, this saddle-point formulation not only gives us the opportunity to use the techniques for the analysis of SG methods to derive finite-sample performance bounds for the GTD algorithms, as we will show in Section~\ref{sec:analysis}, but also it allows us to use the powerful algorithms that have been recently developed to solve the SG problems and derive more efficient versions of GTD and GTD2. Stochastic Mirror-Prox (SMP)~\citep{juditsky2008solving} is an ``almost dimension-free'' non-Euclidean extra-gradient method that deals with both smooth and non-smooth stochastic optimization problems (see~\citealt{sra2011optimization} and~\citealt{bubeck2014optml} for more details). Using SMP, we propose a new version of GTD/GTD2, called GTD-MP/GTD2-MP, with the following update formula:\footnote{For simplicity, we only describe mirror-prox GTD methods where the mirror map is identity, which can also be viewed as extragradient (EG) GTD methods. \cite{proximalrl} gives a more detailed discussion of a  broad range of mirror maps in RL.} 

\vspace{-0.15in}
\begin{small}
\begin{align*}
y_t^m &= y_t + \alpha _t(\hat b_t - \hat  A_t\theta _t - \hat  M_ty_t), \quad\;\;\; \theta_t^m = \theta_t + \alpha_t \hat A_t^\top y_t, \\
y_{t + 1} &= y_t + \alpha_t(\hat b_t - \hat A_t\theta_t^m - \hat M_ty_t^m), \; \theta_{t + 1} = \theta_t + \alpha_t \hat A_t^\top y_t^m.
\end{align*}
\end{small}
\vspace{-0.2in}

After $T$ iterations, these algorithms return ${\bar \theta _T}: = \frac{{\sum\nolimits_{t = 1}^T {{\alpha _t}{\theta _t}} }}{{\sum\nolimits_{t = 1}^T {{\alpha _t}} }}$ and ${\bar y_T}: = \frac{{\sum\nolimits_{t = 1}^T {{\alpha _t}{y_t}} }}{{\sum\nolimits_{t = 1}^T {{\alpha _t}} }}$. 
The details of the algorithm is shown in Algorithm~\ref{alg:GTD2mp}, and the experimental comparison study between GTD2 and GTD2-MP is reported in Section~\ref{sec:exp}.

\begin{algorithm}
\caption{GTD2-MP}
\label{alg:GTD2mp} 
\begin{algorithmic}[1]
\FOR {$t=1,\ldots,n$}
\STATE Update parameters

\vspace{-0.175in}
\begin{small}
\begin{align*}
\nonumber
{\delta_{t}} &= {r_t} - \theta _t^\top \Delta {\phi _t}\\
\nonumber
y_t^m &= {y_t} + {\alpha _t}(\rho_t{\delta _t} - \phi _t^\top {y_t}){\phi _t}\\
\nonumber
\theta _t^m &= {\theta _t} + {\alpha _t}\rho_t\Delta {\phi _t}(\phi _t^\top {y_t})\\
\nonumber
\delta _t^m &= {r_t} - (\theta _t^m)^\top \Delta {\phi _t}\\
\nonumber
{y_{t + 1}} &= {y_t} + {\alpha _t}(\rho_t\delta _t^m - \phi _t^\top y_t^m){\phi _t}\\
\nonumber
{\theta _{t + 1}} &= {\theta _t} + {\alpha _t}\rho_t\Delta {\phi _t}(\phi _t^\top y_t^m)
\end{align*}
\end{small}
\vspace{-0.2in}

\ENDFOR
\STATE OUTPUT

\vspace{-0.075in}
\begin{small}
\begin{equation}
{\bar \theta _n} := \frac{{\sum\nolimits_{t = 1}^n {{\alpha _t}{\theta _t}} }}{{\sum\nolimits_{t = 1}^n {{\alpha _t}} }}
\quad , \quad
{\bar y _n} := \frac{{\sum\nolimits_{t = 1}^n {{\alpha _t}{y _t}} }}{{\sum\nolimits_{t = 1}^n {{\alpha _t}} }}
\end{equation}
\end{small}
\vspace{-0.1in}

\end{algorithmic}
\end{algorithm}


\section{FURTHER ANALYSIS}

\subsection{ACCELERATION ANALYSIS}
In this section, we are going to discuss the convergence rate of the accelerated algorithms using off-the-shelf accelerated solvers for saddle-point problems. For simplicity, we will discuss the error bound of $\frac{1}{2}||A\theta  - b||_{{M^{ - 1}}}^2$, and the corresponding error bound of $\frac{1}{2}||A\theta  - b||_{\xi}^2$ and $\|V - {\bar v}_n|{|_{\xi} }$ can be likewise derived as in above analysis.
As can be seen from the above analysis, the convergence rate of the GTD algorithms family is
%
\begin{equation}
({\bf{GTD}}/{\bf{GTD2}}):\quad O\left( {\frac{{\tau  + ||A|{|_2}  + \sigma }}{{\sqrt n }}} \right)
\end{equation}
In this section, we raise an interesting question: what is the ``optimal" GTD algorithm? To answer this question, we review the convex-concave formulation of GTD2. 
According to convex programming complexity theory~\citep{juditsky2008solving},  the un-improvable convergence rate of stochastic saddle-point problem (\ref{eq:sp}) is 
%
%

\begin{equation}
({\bf{Optimal}}):\quad O\left( {\frac{{\tau}}{{{n^2}}} + \frac{{||A|{|_2}}}{n} + \frac{\sigma }{{\sqrt n }}} \right)
\label{eq:rateoptimal}
\end{equation}
There are many readily available stochastic saddle-point solvers, such as stochastic Mirror-Prox  (GTD2-MP)~\citep{juditsky2008solving} algorithm, which leads to our proposed GTD2-MP algorithm.  SMP is able to accelerate the convergence rate of our gradient TD method to: 
%
\begin{equation}
({\bf{SMP}}):\quad O\left( {\frac{{\tau  + ||A|{|_2} }}{n} + \frac{\sigma }{{\sqrt n }}} \right),
\end{equation}
 and stochastic accelerated primal-dual (SAPD) method \citep{chen2013optimal} which can reach the optimal convergence rate in (\ref{eq:rateoptimal}). Due to space limitations, we are unable to present a more complete description, and refer interested readers to \citet{juditsky2008solving, chen2013optimal} for more details.


\subsection{LEARNING WITH BIASED $\rho_t$}

The importance weight factor $\rho_t$ is lower bounded by $0$, but yet may have an arbitrarily large upper bound.
In real applications, the importance weight factor $\rho_t$ may not be estimated exactly, i.e., the estimation $\hat{\rho}_t$ is a biased estimation of the true $\rho_t$.  To this end, the stochastic gradient we obtained is not the unbiased gradient of $L(\theta,y)$ anymore.
  This falls into a broad category of learning with inexact stochastic gradient, or termed as stochastic gradient methods with an inexact oracle \citep{inexact:stochastic:devolder2011}. Given the inexact stochastic gradient, the convergence rate and performance bound become much worse than the results with exact stochastic gradient. Based on the analysis by \citet{juditsky2008solving}, we have the error bound for inexact estimation of $\rho_t$.
\begin{proposition}
Let ${{\bar{\theta}  }_n}$ be defined as above. Assume at the $t$-th iteration, $\hat{\rho}_t$ is the estimation of the importance weight factor $\rho_t$ with bounded bias such that 
$
\mathbb{E}[ \hat{\rho}_t  - {\rho _t} ] \le \epsilon. 
$ 
The convergence rates of GTD/GTD2 algorithms with iterative averaging is as follows,
i.e.,
\begin{equation}
 ||A{\bar \theta _n} - b||_{{M^{ - 1}}}^2 \le O\left( {\frac{{\tau  + ||A||_2  + \sigma }}{{\sqrt n }}} \right) +O(\epsilon)
\label{eq:biasrho}
\end{equation}
\end{proposition}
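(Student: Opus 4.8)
The plan is to treat the use of the biased importance-weight estimate $\hat\rho_t$ as a stochastic saddle-point problem solved with an \emph{inexact first-order oracle}, and to show that the bias enters the duality-gap bound of Proposition~\ref{pro:hp} only additively. First I would make the bias of the gradient oracle explicit. Writing the biased estimates as $\hat A_t^{\hat\rho}=\hat\rho_t\phi_t\Delta\phi_t^\top$ and $\hat b_t^{\hat\rho}=\hat\rho_t r_t\phi_t$, and noting that $\hat M_t$ carries no $\rho$-factor and is therefore still unbiased, the pair of update directions sampled by Algorithm~\ref{alg:pgtd2} has conditional mean
\begin{equation*}
\mathbb{E}\!\left[\begin{pmatrix}(\hat A_t^{\hat\rho})^\top y\\[2pt] \hat b_t^{\hat\rho}-\hat A_t^{\hat\rho}\theta-\hat M_t y\end{pmatrix}\,\Big|\,\mathcal{H}_{t}\right]
= \Psi(\theta,y)+\eta(\theta,y),
\end{equation*}
where $\Psi$ is the exact monotone field of the saddle-point problem~\eqref{eq:sp} and $\eta(\theta,y)=\big((\tilde A-A)^\top y,\;(\tilde b-b)-(\tilde A-A)\theta\big)$ collects the terms generated by $\tilde A-A=\mathbb{E}[(\hat\rho_t-\rho_t)\phi_t\Delta\phi_t^\top]$ and $\tilde b-b=\mathbb{E}[(\hat\rho_t-\rho_t)r_t\phi_t]$. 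Using the boundedness of the features (by $L$), the rewards (by $R_{\max}$), and the feasible sets (by $R$) from Assumptions~\ref{ass:xyfeasible}--\ref{ass:bound}, together with the hypothesis $\mathbb{E}[\hat\rho_t-\rho_t]\le\epsilon$, each block of $\eta(\theta,y)$ is bounded uniformly on $\Theta\times Y$ by a constant multiple of $\epsilon$, so $\sup_{(\theta,y)\in\Theta\times Y}\|\eta(\theta,y)\|\le c\,\epsilon$ for a constant $c$ depending only on $L$, $R_{\max}$, and $R$.

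Next I would invoke the inexact-oracle version of the robust stochastic-approximation analysis of~\citet{juditsky2008solving,inexact:stochastic:devolder2011}. The central observation is that in the telescoping argument that yields the gap bound, the sampled field appears only through the inner product $\langle \Psi_t, z_t-z\rangle$ with $z=(\theta,y)$; replacing the sampled field by its \emph{exact} counterpart introduces, at step $t$, the extra term $\langle \eta(z_t),\,z_t-z\rangle$, whose absolute value is at most $\|\eta(z_t)\|\,\|z_t-z\|\le c\epsilon\cdot 2R$ because both iterates lie in a ball of radius $R$. Averaging these $n$ per-step contributions with the step-size weights leaves a purely deterministic additive term of order $\epsilon R$, while the zero-mean (martingale) and variance parts are handled exactly as in the proof of Proposition~\ref{pro:hp}, with $\|A\|_2$ controlled by Lemma~\ref{lem:abbound}. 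Consequently, with probability at least $1-\delta$,
\begin{equation*}
{\rm Err}(\bar\theta_n,\bar y_n)\le O\!\left(\frac{\tau+\|A\|_2+\sigma}{\sqrt n}\right)+O(\epsilon),
\end{equation*}
where the first term is the exact-oracle bound and the $O(\epsilon)$ term absorbs the constant $cR$.

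Finally I would transfer this gap bound to the quantity of interest. By the inequality~\eqref{eq:err1} established in the proof of Theorem~\ref{thm:1}, one has $\tfrac12\|A\bar\theta_n-b\|_{M^{-1}}^2\le{\rm Err}(\bar\theta_n,\bar y_n)$, so the displayed bound immediately yields the claimed estimate $\|A\bar\theta_n-b\|_{M^{-1}}^2\le O\big((\tau+\|A\|_2+\sigma)/\sqrt n\big)+O(\epsilon)$. The main obstacle is the first step: one must argue that the \emph{scalar} bias hypothesis $\mathbb{E}[\hat\rho_t-\rho_t]\le\epsilon$ propagates to a uniform $O(\epsilon)$ bound on the \emph{vector/matrix-valued} bias $\eta(\theta,y)$ of the gradient oracle, which requires the boundedness of features and rewards to pull the bounded feature factors out of the expectation (and, strictly, control of $\mathbb{E}\|\hat A_t^{\hat\rho}-\hat A_t\|$ rather than just the signed mean). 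Once the uniform bias bound $\sup\|\eta\|\le c\epsilon$ is in hand, the remaining steps are a bookkeeping adaptation of the exact-oracle argument already used for Proposition~\ref{pro:hp} and Theorem~\ref{thm:1}.
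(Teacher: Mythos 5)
The paper gives no proof of this proposition at all---it merely asserts the bound ``based on the analysis by Juditsky et al.''---and your inexact-oracle argument is precisely the route that citation points to, carried out correctly: the oracle bias enters the duality-gap telescoping only through the terms $\langle\eta(z_t),z_t-z\rangle\le 2Rc\epsilon$, which average to an additive $O(\epsilon)$ on top of the $O\big((\tau+\|A\|_2+\sigma)/\sqrt n\big)$ rate of Proposition~\ref{pro:hp}, and the gap is transferred to $\|A\bar\theta_n-b\|_{M^{-1}}^2$ via Eq.~\eqref{eq:err1} exactly as in Theorem~\ref{thm:1}. The one substantive issue---that the stated hypothesis $\mathbb{E}[\hat\rho_t-\rho_t]\le\epsilon$ is a one-sided, unconditional scalar bound and does not by itself control $\big\|\mathbb{E}[(\hat\rho_t-\rho_t)\phi_t\Delta\phi_t^\top]\big\|$, so one really needs $\mathbb{E}\big[|\hat\rho_t-\rho_t|\big]\le\epsilon$ or a conditional absolute-bias bound---is a defect of the proposition as stated rather than of your argument, and you correctly flag it yourself.
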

This implies that the inexact estimation of $\rho_t$ may  cause disastrous estimation error, which implies that an exact estimation of $\rho_t$ is very important.


\subsection{FINITE-SAMPLE ANALYSIS OF ONLINE LEARNING}
Another more challenging scenario is online learning scenario, where the samples are interactively generated by the environment, or by an interactive agent. The difficulty lies in that the sample distribution does not follow i.i.d sampling condition anymore, but follows an underlying Markov chain $\mathcal{M}$. If the Markov chain $\mathcal{M}$'s mixing time is small enough, i.e., the sample distribution reduces to the stationary distribution of $\pi_b$ very fast, our analysis still applies. However, it is usually the case that the underlying Markov chain's mixing time $\tau_{\rm{mix}}$ is not small enough. The analysis result can be obtained by extending the result of recent work~\citep{duchi2012ergodic}  from  strongly convex loss functions to saddle-point problems, which is non-trivial and is thus left for future work.


\subsection{DISCUSSION OF TDC ALGORITHM}

Now we discuss the limitation of our analysis with regard to the temporal difference with correction (TDC) algorithm \citep{FastGradient:2009}. 
Interestingly, the TDC algorithm seems not to have an explicit saddle-point representation, since it incorporates the information of the optimal $y_{t}^{*}(\theta_{t})$ into the update of $\theta_t$, a quasi-stationary condition which is commonly used in two-time-scale stochastic approximation approaches.
 An intuitive answer to the advantage of TDC over GTD2 is that
the TDC update of $\theta_{t}$ can be considered as incorporating
the prior knowledge into the update rule: for a stationary $\theta_{t}$,
if the optimal  $y_{t}^{*}(\theta_{t})$
has a closed-form solution or is easy to compute, then incorporating
this $y_{t}^{*}(\theta_{t})$ into the update law tends to accelerate
the algorithm's convergence performance. For the GTD2 update, note that there is a sum of two terms where $y_{t}$
appears, which are $\rho_t({\phi_{t}}-\gamma\phi_{t}^{\prime})(y_{t}^{T}{\phi_{t}})={\rho_t\phi_{t}}(y_{t}^{T}{\phi_{t}})-\gamma\rho_t\phi_{t}^{\prime}(y_{t}^{T}{\phi_{t}})$. Replacing $y_{t}$ in the first term with $y^*_{t}(\theta_t)=\mathbb{E}{[{\phi_{t}}\phi_{_{t}}^{T}]^{-1}}\mathbb{E}[\rho_t{\delta_{t}}(\theta_t){\phi_{t}}]$,
we have the TDC update rule. 
%
%
Note that in contrast to GTD/GTD2, TDC is a two-time scale algorithm; Also, note that TDC does not minimize \emph{any} objective functions and the convergence of TDC requires more restrictions than GTD2 as shown by \citet{FastGradient:2009}.


\section{EMPIRICAL EVALUATION}
\label{sec:exp}
In this section, we compare the previous GTD2 method with our proposed GTD2-MP method using various domains with regard to their value function approximation performance capability. It should be mentioned that since the major focus of this paper is on policy evaluation, the comparative study focuses on value function approximation and thus comparisons on control learning performance is not reported in this paper. 

\subsection{BAIRD DOMAIN}
The Baird example \citep{Baird:ResidualAlgorithms1995} is a well-known example to test the performance of off-policy convergent algorithms. 
Constant stepsize $\alpha = 0.005$ for GTD2 and $\alpha = 0.004$ for GTD2-MP, which are   chosen via comparison studies as in \citep{dann2014tdsurvey}.
Figure~\ref{fig:star} shows the MSPBE curve of GTD2, GTD2-MP of $8000$ steps averaged over $200$ runs. We can see that GTD2-MP has a significant improvement over the GTD2 algorithm
wherein both the MSPBE and the variance are substantially reduced.
\begin{figure}[h]
\centering
\includegraphics[width=.5\textwidth,height=1.75in]{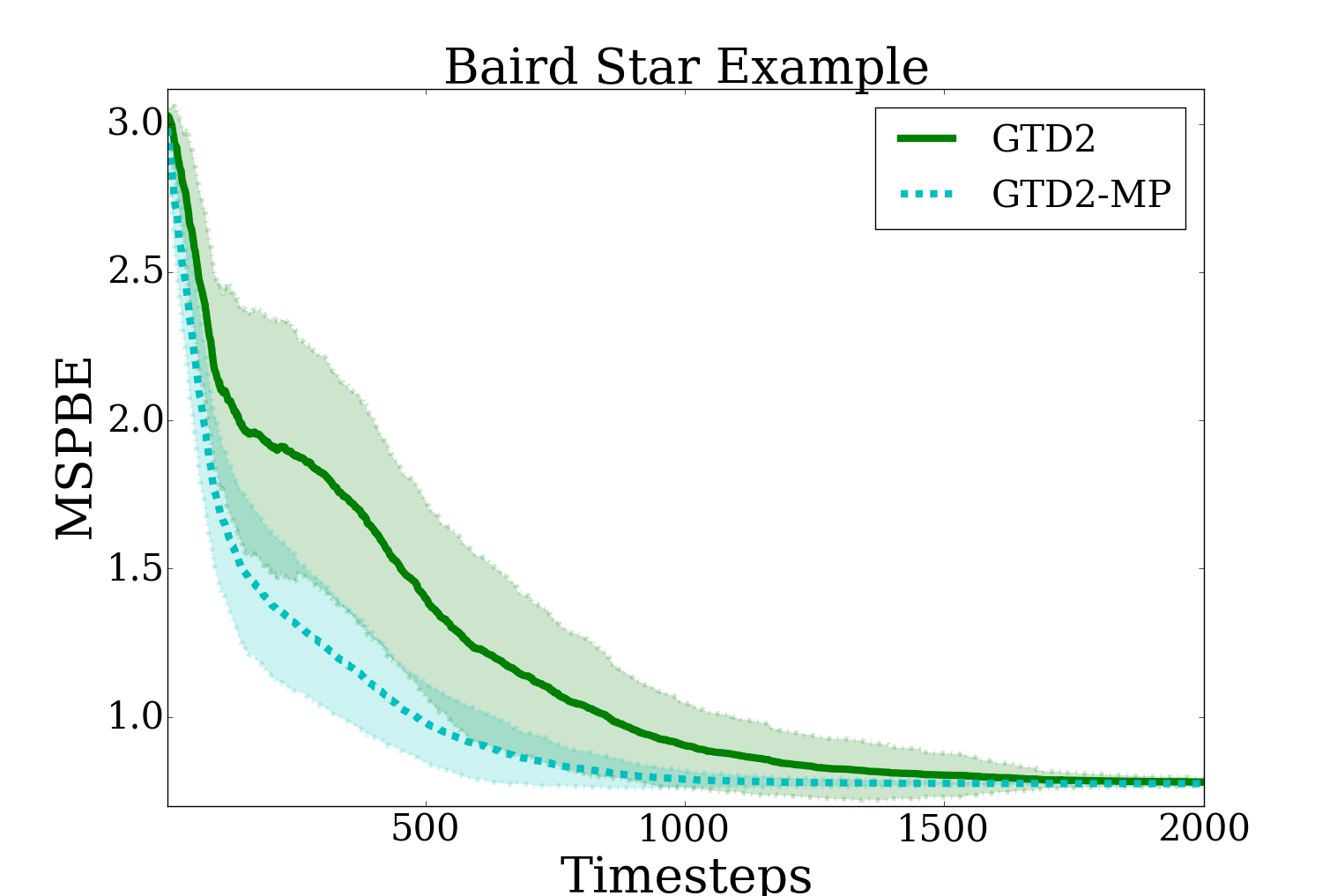}
\caption{Off-Policy Convergence Comparison}
\label{fig:star}
\end{figure}

%


\subsection{$50$-STATE CHAIN DOMAIN}

The 50 state chain~\citep{lagoudakis:jmlr} is a standard MDP domain. There are 50 discrete
states $\{ {s_i}\} _{i = 1}^{50}$ and two actions moving the agent left ${s_i} \to {s_{\max ({{i - 1}},1)}}$ and right ${s_i} \to {s_{\min ({{i + 1}},50)}}$. The actions succeed with probability $0.9$; failed actions move the agent in the opposite direction. The discount factor is $\gamma = 0.9$. The agent receives a reward of $+1$ when in states $s_{10}$ and $s_{41}$. All other states have a reward of $0$.
In this experiment, we compare the performance of the value approximation w.r.t different set of stepsizes $\alpha  = 0.0001,0.001,0.01,0.1,0.2, \cdots ,0.9$ using the BEBF basis \citep{bebf:parr:icml07}, and Figure~\ref{fig:chain} shows the value function approximation result, where the cyan curve is the true value function, the red dashed curve is the GTD result,and the black curve is the GTD2-MP result. From the figure, one can see that GTD2-MP is much more robust with stepsize choice than the GTD2 algorithm.

\begin{figure}
\centering
\includegraphics[width=0.5\textwidth, height = 6cm]{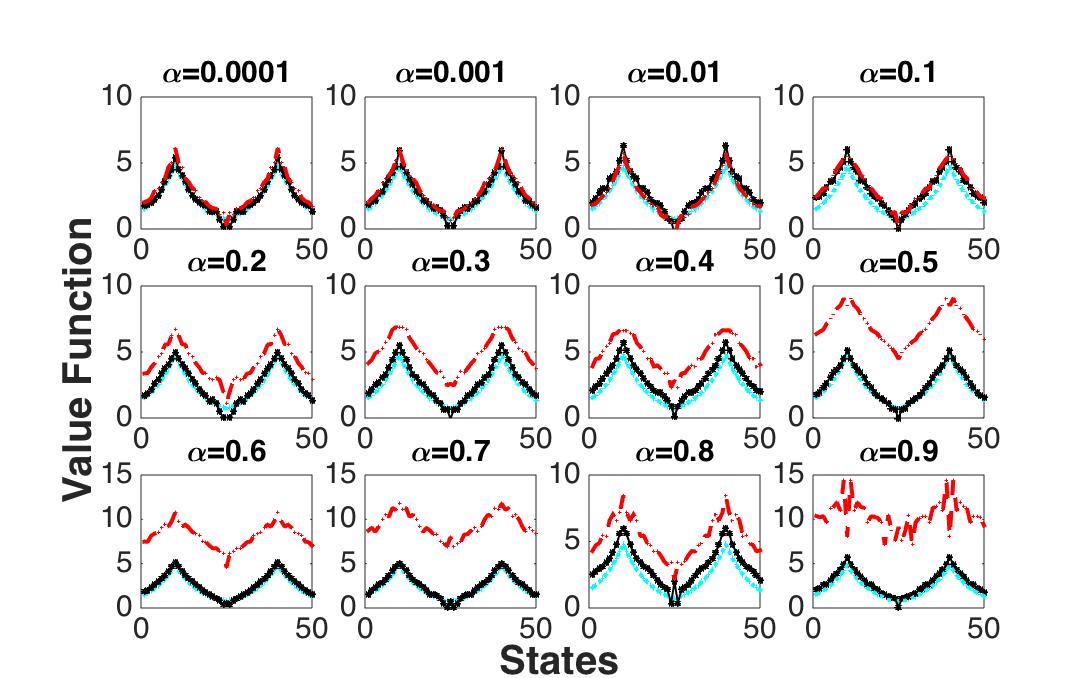}
\caption{Chain Domain}
\label{fig:chain}
\end{figure}


\subsection{ENERGY MANAGEMENT DOMAIN}
In this experiment we compare the performance of the algorithms on an energy management domain.
The decision maker must decide how much energy to purchase or sell subject to stochastic prices. This problem is relevant in the context of utilities as well as in settings such as hybrid vehicles. The prices are generated from a Markov chain process. The amount of available storage is limited and it also degrades with use. The degradation process is based on the physical properties of lithium-ion batteries and discourages fully charging or discharging the battery. The energy arbitrage problem is closely related to the broad class of inventory management problems, with the storage level corresponding to the inventory. However, there are no known results describing the structure of optimal threshold policies in energy storage.

Note that since for this off-policy evaluation problem, the formulated $A\theta=b$ does not have a solution, and thus the optimal MSPBE($\theta^*$) (resp. MSBE($\theta^*$) ) do not reduce to $0$. 
 The result is averaged over $200$ runs,  and $\alpha = 0.001$ for both GTD2 and GTD2-MP is chosen via comparison studies for each algorithm.
As can be seen from FIgure~\ref{fig:inv}, in the initial transit state, GTD2-MP performs much better than  GTD2 at the transient state. Then after reaching the steady state, as can be seen from Table~\ref{tab:inv}, we can see that GTD2-MP reaches better steady state solution than the GTD algorithm.
Based on the above empirical results and many other experiments we have conducted in other domains, we can conclude that GTD2-MP usually performs much better than the ``vanilla'' GTD2 algorithm. 

\begin{figure}
\centering
\begin{minipage}{1\textwidth}
\includegraphics[width= .5\textwidth, height=1.3in]{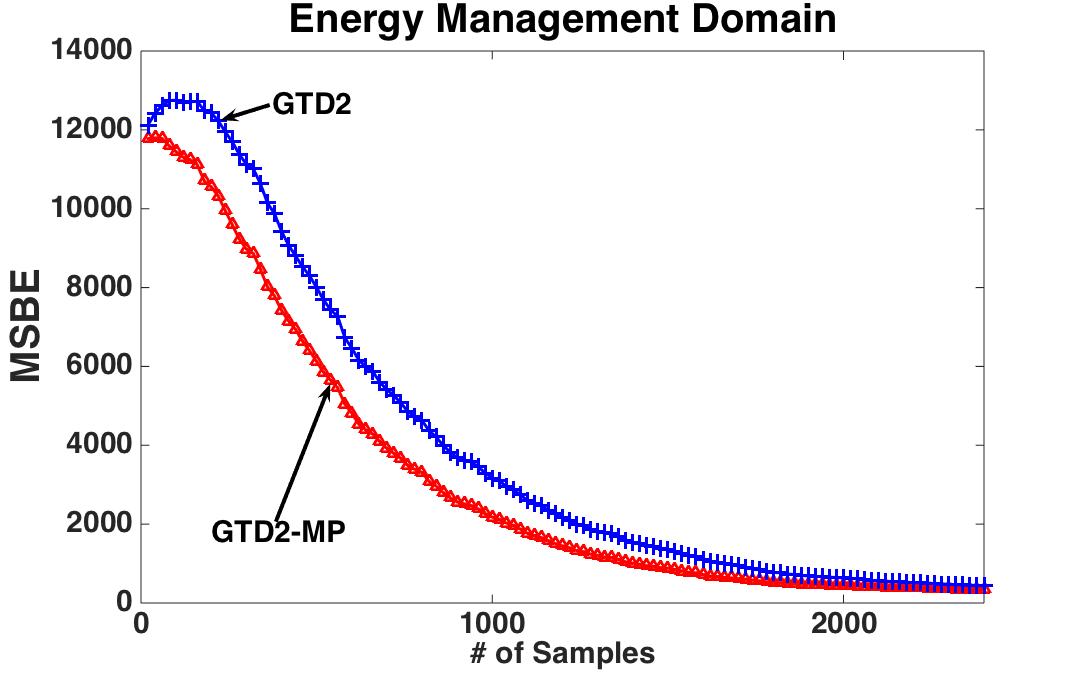}\\
\includegraphics[width= .5\textwidth, height=1.3in]{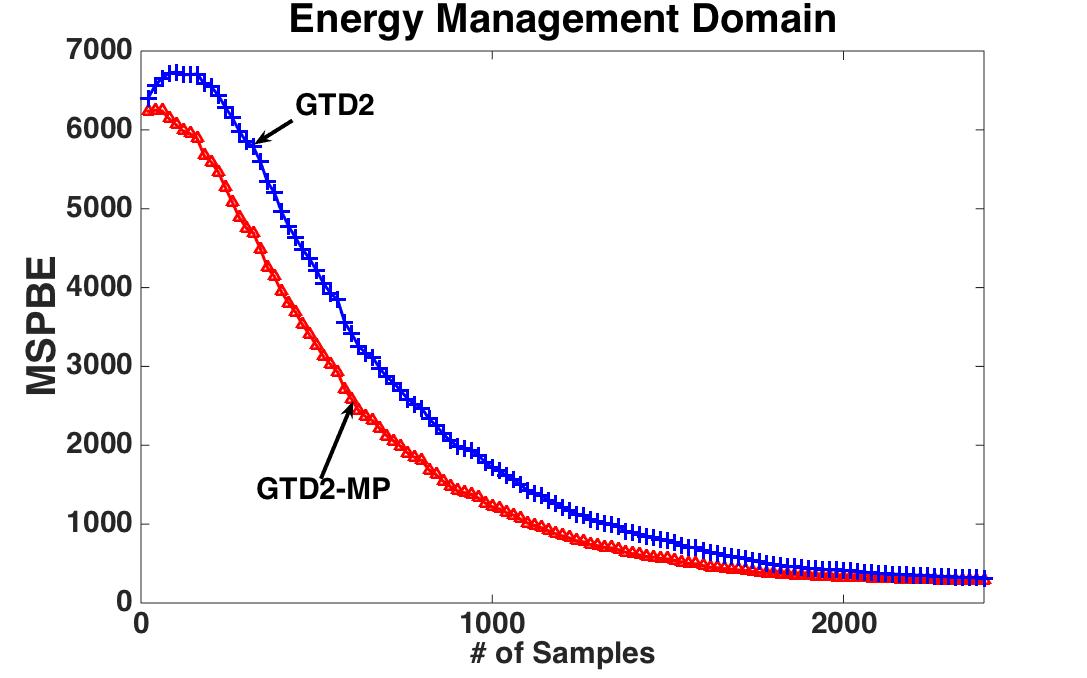}
\end{minipage}
\caption{Energy Management Example}
\label{fig:inv}
\end{figure}

\begin{table}
\centering
\begin{tabular}{|c|c|c|}
\hline 
Algorithm & MSPBE & MSBE        \tabularnewline
\hline 
GTD2 & $176.4$ & $228.7$      \tabularnewline
\hline 
GTD2-MP & $138.6$ & $191.4$    \tabularnewline
\hline 
\end{tabular}
\caption{Steady State Performance Comparison}
\label{tab:inv}
\end{table}



\section{SUMMARY}

In this paper, we showed how gradient TD methods  can be shown to be true stochastic gradient methods with respect to a saddle-point primal-dual objective function, which paved the way for the finite-sample analysis of off-policy convergent gradient-based temporal difference learning algorithms such as GTD and GTD2. 
 Both error bound and performance bound are provided, which shows that the value function approximation bound of the GTD algorithms family is $O\left( {\frac{d}{{{n^{1/4}}}}} \right)$. 
 Further, two revised algorithms, namely the projected GTD2 algorithm and the accelerated GTD2-MP algorithm,  are proposed. 
 There are many interesting directions for future research. Our framework can be easily used to design regularized sparse gradient off-policy TD methods. 
 One interesting direction is to investigate the convergence rate and performance bound for the TDC algorithm, which lacks a saddle-point formulation. 
 The other is to explore  tighter value function approximation bounds for off-policy learning. 

\section*{Acknowledgements}
This material is based upon work supported by the  National Science Foundation under Grant Nos. IIS-1216467. Any opinions, findings, and conclusions or recommendations expressed in this material are those of the authors and do not necessarily reflect the views of the NSF.

\newpage
\begin{small}
\bibliography{thesisbib}
\bibliographystyle{named}
\end{small}


\newpage
\appendix

\section{PROOF OF LEMMA~\ref{lem:abbound}}
\begin{proof}
From the boundedness of the features (by $L$) and the rewards (by ${{\rm{R}}_{\max }}$), we have
\begin{align*}
||A||_2 &= ||\mathbb{E}[{\rho _t}{\phi _t}\Delta \phi _t^ \top ]|{|_2} \\
&\le {{\max }_s}||\rho (s)\phi (s){{(\Delta \phi (s))}^ \top }|{|_2} \\
&\le {\rho _{\max }}{{\max }_s}||\phi (s)|{|_2}{{\max }_s}||\phi (s) - \gamma \phi '(s)|{|_2} \\
&\le {\rho _{\max }}{{\max }_s}||\phi (s)|{|_2}{{\max }_s}\left( {||\phi (s)|{|_2} + \gamma ||\phi '(s)|{|_2}} \right) \\
&\le (1 + \gamma ){\rho _{\max }}L^2 d.
\end{align*}
The second inequality is obtained by the consistent inequality of matrix norm, the third inequality comes from the triangular norm inequality, and the fourth inequality comes from the vector norm inequality $||\phi (s)|{|_2} \le ||\phi (s)|{|_\infty }\sqrt d  \le L\sqrt d$. The bound on $||b||_2$ can be derived in a similar way as follows.
\begin{align*}
||b||_2 &= ||\mathbb{E}[{\rho _t}{\phi _t}r_t ]|{|_2} \\
&\le {\max _s}||\rho (s)\phi (s)r(s)|{|_2} \\
&\le {\rho _{\max }}{{\max }_s}||\phi (s)|{|_2}{{\max }_s}||r (s)|{|_2} \\
&\le {\rho _{\max }}L{{\rm{R}}_{\max }}.
\end{align*}
It completes the proof.
\end{proof}

\section{PROOF OF PROPOSITION~\ref{pro:hp}}

\begin{proof}
The proof of Proposition~\ref{pro:hp} mainly relies on Proposition~3.2 in~\citet{RobustSA:2009}. We just need to map our convex-concave {\em stochastic} saddle-point problem in Eq.~\ref{eq:sp}, i.e., 
\begin{equation*}
\mathop{\min}\limits_{\theta\in\Theta}\mathop{\max}\limits_{y\in Y}\left({L(\theta,y)=\left\langle {b-A\theta,y}\right\rangle -\frac{1}{2}||y||_{M}^{2}}\right)
\end{equation*}
to the one in Section~3 of~\citet{RobustSA:2009} and show that it satisfies all the conditions necessary for their Proposition~3.2. Assumption~\ref{ass:xyfeasible} guarantees that our feasible sets $\Theta$ and $Y$ satisfy the conditions in~\citet{RobustSA:2009}, as they are non-empty bounded closed convex subsets of $\mathbb{R}^d$. We also see that our objective function $L(\theta,y)$ is {\em convex} in $\theta\in\Theta$ and {\em concave} in $y\in Y$, and also {\em Lipschitz continuous} on $\Theta\times Y$. It is known that in the above setting, our saddle-point problem in Eq.~\ref{eq:sp} is solvable, i.e.,~the corresponding {\em primal} and {\em dual} optimization problems: $\min_{\theta\in\Theta}\big[\max_{y\in Y}L(\theta,y)\big]$ and $\max_{y\in Y}\big[\min_{\theta\in\Theta}L(\theta,y)\big]$ are solvable with equal optimal values, denoted $L^*$, and pairs $(\theta^*,y^*)$ of optimal solutions to the respective problems from the set of saddle points of $L(\theta,y)$ on $\Theta\times Y$. 

For our problem, the {\em stochastic sub-gradient vector} $G$ is defined as

\vspace{-0.15in}
\begin{small}
\begin{equation*}
G(\theta ,y) = \left[ {\begin{array}{*{20}{c}}
{{G_\theta }(\theta ,y)}\\
{ - {G_y}(\theta ,y)}
\end{array}} \right] 
= \left[ {\begin{array}{*{20}{c}}
{ - \hat{A}_t^ \top y}\\
{ - (\hat{b}_t - \hat{A}_t\theta  - \hat{M}_ty)}
\end{array}} \right]. 
\end{equation*}
\end{small}
\vspace{-0.2in}

This guarantees that the {\em deterministic sub-gradient vector}

\vspace{-0.1in}
\begin{small}
\begin{equation*}
g(\theta ,y) = \left[ {\begin{array}{*{20}{c}}
{{g_\theta }(\theta ,y)}\\
{ - {g_y}(\theta ,y)}
\end{array}} \right] 
= \left[ {\begin{array}{*{20}{c}}
\mathbb{E}\big[G_\theta(\theta,y)\big] \\
- \mathbb{E}\big[G_y(\theta,y)\big]
\end{array}} \right] 
\end{equation*}
\end{small}
\vspace{-0.15in}

is well-defined, i.e.,~$g_\theta(\theta,y)\in\partial_\theta L(\theta,y)$ and $g_y(\theta,y)\in\partial_y L(\theta,y)$.

We also consider the Euclidean stochastic approximation (E-SA) setting in~\citet{RobustSA:2009} in which the {\em distance generating functions} $\omega_\theta:\Theta\rightarrow\mathbb{R}$ and $\omega_y:Y\rightarrow\mathbb{R}$ are simply defined as
\begin{equation*}
\omega_\theta=\frac{1}{2}||\theta||_2^2, \quad\quad \omega_y=\frac{1}{2}||y||_2^2,
\end{equation*}
modulus $1$ w.r.t.~$||\cdot||_2$, and thus, $\Theta^o=\Theta$ and $Y^o=Y$ (see pp.~1581~and~1582~in~\citealt{RobustSA:2009}). This allows us to equip the set $Z=\Theta\times Y$ with the distance generating function 
\begin{equation*}
\omega(z)=\frac{\omega_\theta(\theta)}{2D_\theta^2}+\frac{\omega_y(y)}{2D_y^2},
\end{equation*}
where $D_\theta$ and $D_y$ defined in Assumption~\ref{ass:xyfeasible}. 

Now that we consider the Euclidean case and set the norms to $\ell_2$-norm, we can compute upper-bounds on the expectation of the dual norm of the stochastic sub-gradients 
\begin{equation*}
\mathbb{E}\left[||G_\theta(\theta,y)||^2_{*,\theta}\right] \le M_{*,\theta}^2, \quad \mathbb{E}\left[||G_y(\theta,y)||^2_{*,y}\right] \le M_{*,y}^2,
\end{equation*}
where $||\cdot||_{*,\theta}$ and $||\cdot||_{*,y}$ are the dual norms in $\Theta$ and $Y$, respectively. Since we are in the Euclidean setting and use the $\ell_2$-norm, the dual norms are also $\ell_2$-norm, and thus, to compute $M_{*,\theta}$, we need to upper-bound $\mathbb{E}\left[||G_\theta(\theta,y)||_2^2\right]$ and $\mathbb{E}\left[||G_y(\theta,y)||_2^2\right]$. 

To bound these two quantities, we use the following equality that holds for any random variable $x$: 
\begin{equation*}
\mathbb{E}[||x||_2^2] =  \mathbb{E}[||x - {\mu _x}||_2^2] + ||{\mu _x}||_2^2,
\end{equation*}
where $\mu_x = \mathbb{E}[x]$. Here how we bound $\mathbb{E}\left[||G_\theta(\theta,y)||_2^2\right]$,

\vspace{-0.15in} 
\begin{small}
\begin{align*}
\nonumber
\mathbb{E}\left[||G_\theta(\theta,y)||_2^2\right] 
&= \mathbb{E}[||\hat A_t^ \top y|{|}^2_2] \\
\nonumber
&= \mathbb{E}[||\hat A_t^ \top y - {A^ \top }y|{|}_2^2] + ||{A^ \top }y|{|}^2_2 \\
\nonumber
&\le {\sigma }^2_2 + {(||A|{|_2}||y|{|_2})^2}\\
&\le {\sigma }^2_2 + ||A|{|}^2_2{R^2},
\end{align*}
\end{small}
\vspace{-0.15in} 

where the first inequality is from the definition of $\sigma_3$ in Eq.~\ref{eq:sigma123} and the consistent inequality of the matrix norm, and the second inequality comes from the boundedness of the feasible sets in Assumption~\ref{ass:xyfeasible}. Similarly we bound $\mathbb{E}\left[||G_y(\theta,y)||_2^2\right]$ as follows:

\vspace{-0.15in} 
\begin{small}
\begin{align*}
\mathbb{E}[||G_y(\theta,y&)||_2^2] = \mathbb{E}[||{{\hat b}_t} - {{\hat A}_t}\theta  - {{\hat M}_t}y||^2_2] \\
&= ||b - A\theta  + My||_2^2 \\
&+ \mathbb{E}[||{{\hat b}_t} - {{\hat A}_t}\theta  - {{\hat M}_t}y - (b - A\theta  - My)||_2^2]\\
&\le {(||b|{|_2} + ||A|{|_2}||\theta |{|_2} + \tau ||y|{|_2})^2} + \sigma _1^2\\
&\le \big(||b||_2 + (||A||_2 + \tau)R\big)^2 + \sigma _1^2,
\end{align*}
\end{small}
\vspace{-0.15in} 

where these inequalities come from the definition of $\sigma_1$ in Eq.~\ref{eq:sigma123} and the boundedness of the feasible sets in Assumption~\ref{ass:xyfeasible}. This means that in our case we can compute $M_{*,\theta }^2, M_{*,y}^2$ as
\begin{align*}
M_{*,\theta }^2 &= \sigma _2^2 + ||A||_2^2{R^2},\\
M_{*,y}^2 &= \big(||b||_2 + (||A||_2 + \tau)R\big)^2 + \sigma _1^2,
\end{align*}
and as a result
\begin{align*}
M_*^2 &= 2D_\theta ^2M_{*,\theta }^2 + 2D_y^2M_{*,y}^2 = 2R^2(M_{*,\theta }^2 + M_{*,y}^2) \\
&= R^2\left(\sigma^2 + ||A||_2^2R^2 + \big(||b||_2 + (||A||_2 + \tau)R\big)^2\right)\\
 & \le {\left( {{R^2}\left( {2||A|{|_2} + \tau } \right) + R(\sigma  + ||b|{|_2})} \right)^2},
\end{align*}
where the inequality comes from the fact that $\forall a,b,c \ge 0,a^2 + b^2 + c^2 \le (a + b +c)^2$. Thus, we may write $M_*$ as
\begin{equation}
\label{eq:Mstar-def}
M_* = {{R^2}\left( {2||A|{|_2} + \tau } \right) + R(\sigma  + ||b|{|_2})}.
\end{equation}
Now we have all the pieces ready to apply Proposition 3.2 in~\citet{RobustSA:2009} and obtain a high-probability bound on ${\rm{Err}}({\bar \theta_n},{\bar y_n})$, where $\bar \theta_n$ and $\bar y_n$ (see Eq.~\ref{eq:bartheta}) are the outputs of the revised GTD algorithm in Algorithm~\ref{alg:pgtd2}. From Proposition 3.2 in~\citet{RobustSA:2009}, if we set the step-size in Algorithm~\ref{alg:pgtd2} (our revised GTD algorithm) to $\alpha_t=\frac{2c}{M_*\sqrt{5n}}$, where $c>0$ is a positive constant, $M_*$ is defined by Eq.~\ref{eq:Mstar-def}, and $n$ is the number of training samples in $\mathcal{D}$, with probability of at least $1-\delta$, we have

\vspace{-0.15in}
\begin{small}
\begin{equation}
\label{eq:hp2}
{\rm{Err}}({\bar \theta _n},{\bar y_n}) \le \sqrt {\frac{5}{n}} (8 + 2\log \frac{2}{\delta }){R^2}\left( {2 ||A|{|_2} + \tau + \frac{{||b|{|_2} + \sigma }}{R}} \right).
\end{equation}
\end{small}
\vspace{-0.15in}

Note that we obtain Eq.~\ref{eq:hp2} by setting $c=1$ and the ``light-tail'' assumption in Eq.~\ref{eq:lt}  guarantees that we satisfy the condition in Eq.~3.16 in~\citet{RobustSA:2009}, which is necessary for the high-probability bound in their Proposition~3.2~to hold. The proof is complete by replacing $||A||_2$ and $||b||_2$ from Lemma~\ref{lem:abbound}.
\end{proof}

\section{PROOF OF PROPOSITION~\ref{pro:4} }

\begin{proof}
From Lemma~\ref{lem:v}, we have
\begin{align*}
V - {{\bar v}_n} =& \;{(I - \gamma \Pi P)^{ - 1}} \times\\
&\;\big[ {\left( {V - \Pi V} \right) + \Phi {{C}^{ - 1}}(b-A{{\bar{\theta} }_n})} \big].
\end{align*}
Applying $\ell_2$-norm w.r.t.~the distribution $\xi$ to both sides of this equation, we obtain
\begin{align}
\label{eq:ccc1}
||V - \bar v_n||_\xi \le &||(I - \gamma \Pi P)^{-1}||_\xi \times \\
&\big(||V - \Pi V||_\xi + ||\Phi C^{-1}(b - A\bar\theta_n)||_\xi\big).  \nonumber
\end{align}
Since $P$ is the kernel matrix of the target policy $\pi$ and $\Pi$ is the orthogonal projection w.r.t.~$\xi$, the stationary distribution of $\pi$, we may write 
\begin{equation*}
||{(I - \gamma \Pi P)^{ - 1}}|{|_{\xi} } \le \frac{1}{{1 - \gamma }}.
\end{equation*}
Moreover, we may upper-bound the term $||\Phi C^{-1}(b - A\bar\theta_n)||_\xi$ in~\eqref{eq:ccc1} using the following inequalities: 
\begin{align*}
||\Phi {C^{ - 1}}(b &- A{{\bar \theta }_n})|{|_\xi } \\ 
&\le ||\Phi {C^{ - 1}}(b - A{{\bar \theta }_n})|{|_2}\sqrt {{\xi _{\max }}} \\ 
&\le ||\Phi |{|_2}||{C^{ - 1}}|{|_2}||(b - A{{\bar \theta }_n})|{|_{{M^{ - 1}}}}\sqrt {\tau {\xi _{\max }}} \\
&\le (L\sqrt d )(\frac{1}{\nu })\sqrt {2{\rm{Err}}({{\bar \theta }_n},{{\bar y}_n})} \sqrt {\tau {\xi _{\max }}} \\
&= {\frac{L}{\nu }\sqrt {2d\tau {\xi _{\max }}{\rm{Err}}({{\bar \theta }_n},{{\bar y}_n})} },
\end{align*}
where the third inequality is the result of upper-bounding $||(b - A\bar \theta_n)||_M^{-1}$ using Eq.~\ref{eq:err3} and the fact that $\nu  = 1/||{C^{ - 1}}||{^2_2} = 1/{\lambda _{\max }}({C^{ - 1}}) = {\lambda _{\min }}(C)$ ($\nu$ is the smallest eigenvalue of the covariance matrix $C$).
%
\end{proof}


\section{PROOF OF PROPOSITION~\ref{pro:5} }

\begin{proof}
Using the triangle inequality, we may write 
\begin{equation}
||V-{\bar v_n}|||_\xi \le || {\bar v_n} - \Phi {\theta ^*}|{|_{\xi} } + ||V-\Phi {\theta ^*}||_\xi. 
\label{eq:errdecomp}
\end{equation} 
The second term on the right-hand side of Eq.~\ref{eq:errdecomp} can be upper-bounded by Lemma~\ref{lem:kolter}. Now we upper-bound the first term as follows: 
%
%
\begin{align*}
|| {{\bar v}_n} &- \Phi \theta^*||_{\xi} ^2\\
 &  = ||\Phi {\bar \theta _n} - \Phi {\theta ^*}||_{\xi} ^2 \\
 &  =  ||{{\bar \theta }_n} - {\theta ^*}||_C^2\\
 & \le ||{{\bar \theta }_n} - {\theta ^*}||_{{A^ \top }{M^{ - 1}}A}^2||{({A^ \top }{M^{ - 1}}A)^{ - 1}}|{|_2}||C|{|_2}\\
 & = ||A({{\bar \theta }_n} - {\theta ^*})||_{{M^{ - 1}}}^2||{({A^ \top }{M^{ - 1}}A)^{ - 1}}|{|_2}||C|{|_2}\\
 &  = ||A{{\bar \theta }_n} - b||_{{M^{ - 1}}}^2\frac{{{\tau _C}}}{{{\sigma _{\min }}({A^ \top }{M^{ - 1}}A)}},
\end{align*}
where ${\tau _C} = {\sigma _{\max }}(C)$ is the largest singular value of $C$, and ${\sigma _{\min }}({A^ \top }{M^{ - 1}}A)$ is the smallest singular value of ${{A^ \top }{M^{ - 1}}A}$. Using the result of Theorem~\ref{thm:1}, with probability at least $1-\delta$, we have
\begin{align}
\frac{1}{2}||A{\bar \theta _n} &- b||_{M^{-1}}^2 \le  \tau {\xi _{\max }}{\rm{Err}}({\bar \theta _n},{\bar y_n}).
\label{eq:thm1variant}
\end{align}
Thus,
\begin{align}
|| {\bar v_n} - \Phi \theta^*||_{\xi} ^2  
& \le \frac{{2{\tau _C}\tau {\xi _{\max }}}}{{{\sigma _{\min }}({A^ \top }{M^{ - 1}}A)}}{\rm{Err}}({\bar \theta _n},{\bar y_n})
\label{eq:barvnstar}
\end{align}
From Eqs.~\ref{eq:errdecomp},~\ref{eq:kolter2}, and~\ref{eq:barvnstar}, the result of Eq.~\ref{eq:pro5} can be derived, which completes the proof.
\end{proof}


\end{document}